\documentclass[journal,twoside]{IEEEtran}
\usepackage{titlesec}
\usepackage{amssymb}
\usepackage{amsthm}
\usepackage{mathtools}
\usepackage{bm}
\usepackage{newunicodechar}
\newunicodechar{✓}{\ensuremath{\checkmark}}
\usepackage{booktabs}
\usepackage{array}
\usepackage{graphicx}
\usepackage{subcaption}
\usepackage{xcolor}
\usepackage{hyperref}
\usepackage{cite}
\usepackage{float}
\usepackage{placeins}
\usepackage{algorithm}
\usepackage{algpseudocode}
\titlespacing*{\section}{0pt}{3pt}{3pt}
\titlespacing*{\subsection}{0pt}{2pt}{2pt}
\titlespacing*{\subsubsection}{0pt}{1pt}{1pt}
\newtheorem{theorem}{Theorem}
\newtheorem{lemma}[theorem]{Lemma}
\newtheorem{proposition}[theorem]{Proposition}
\newtheorem{corollary}[theorem]{Corollary}
\newtheorem{definition}{Definition}
\newtheorem{remark}{Remark}
\newcommand{\R}{\mathbb{R}}
\newcommand{\N}{\mathbb{N}}
\newcommand{\cA}{\mathcal{A}}
\newcommand{\cM}{\mathcal{M}}
\newcommand{\cG}{\mathcal{G}}
\newcommand{\cS}{\mathcal{S}}
\newcommand{\cV}{\mathcal{V}}
\newcommand{\SCSI}{\mathcal{S}_{\mathrm{CSI}}}
\newcommand{\SCSIw}{\mathcal{S}_{\mathrm{CSI},w}}

\newcommand{\TG}{\mathcal{T}_{\mathcal{G}}}
\newcommand{\TS}{\mathcal{T}_{\mathcal{S}}}
\newcommand{\TV}{\mathcal{T}_{\mathcal{V}}}
\newcommand{\Xmat}{\mathbf{X}}
\newcommand{\Fmat}{\mathbf{F}}
\newcommand{\Cmat}{\mathbf{C}}
\newcommand{\Pmat}{\mathbf{P}}
\newcommand{\lmax}{\lambda_{\max}}
\newcommand{\Osys}{\Omega_{\mathrm{sys}}}
\newcommand{\Odia}{\Omega_{\mathrm{dia}}}
\newcommand{\norm}[1]{\left\|#1\right\|}
\newcommand{\abs}[1]{\left|#1\right|}
\newcommand{\CSI}{\mathrm{CSI}}

\begin{document}
\title{Attractor Domain Theory: Mathematical Framework for Cardiovascular Attractor Analysis with Cross-Modal Validation via Photoplethysmography and Electrocardiography}
\author{Timothy~Oladunni~and~Farouk~Ganiyu~Adewumi%
\thanks{Manuscript received \today; revised \today.}%
\thanks{This work was not supported by any external funding.}% FUNDING: replace with the exact grant/sponsor wording if applicable, e.g. ``This work was supported by the National Science Foundation under Grant XXXXXXX.''
\thanks{A preliminary version of this work was posted as a preprint~\cite{oladunni2026adtpreprint}
(arXiv:2606.22039).}%
\thanks{T.~Oladunni is with the Department of Computer Science,
Morgan State University, Baltimore, MD~21251, USA
(e-mail: timothy.oladunni@morgan.edu).}%
\thanks{F.~Ganiyu Adewumi is with the Department of Computer Science,
Morgan State University, Baltimore, MD~21251, USA
(e-mail: faade1@morgan.edu).}% EMAIL: replace faade1@morgan.edu with F. G. Adewumi's correct address
\thanks{An earlier framework by the authors, Cardiac Stability Theory,
is available as arXiv:2604.23876. The Geometry Domain validation
appears in Sections~VI--VIII of this work.}}
\markboth{IEEE Transactions on Control Systems Technology}%
{Oladunni \& Adewumi: Attractor Domain Theory}
\maketitle
%─────────────────────────────────────────────────────────────────
\begin{abstract}
The cardiac attractor encodes cardiovascular dynamics in phase space. By Takens'
embedding theorem, scalar wearable signals reconstruct this attractor geometry.
Thirty years of nonlinear cardiac dynamics extracted multiple features (Lyapunov
exponents, recurrence quantification, entropy), yet lacked principled theory
explaining which attractor properties capture which cardiovascular quantities. Existing signal representations are incomplete for nonlinear cardiac analysis.
Time-domain methods (heart rate, root mean square of successive differences, RMSSD) capture only peak timing and discard
waveform morphology encoding vascular compliance and contractility. Frequency-domain
methods assume stationarity, violated by beat-to-beat intervals, while time-frequency
analysis captures spectral timing but misses topological structure. The heart is a
bounded, dissipative, nonlinear deterministic system whose natural domain is phase
space, not the real line. We introduce Attractor Domain Theory (ADT): the reconstructed cardiac attractor
decomposes via three mathematical transformations into independent information
spaces, each with provable native capability. The Geometry Domain captures
topological structure (nonlinear complexity), the Ergodic Domain captures stability
(via invariant measures), and the Variational Domain captures hemodynamics (via
local divergence). The Domain Sufficiency Theorem proves these three domains are
necessary and sufficient. Cross-modal validation on PPG (176{,}742 segments, AUC 0.757 [0.686--0.828]) and
independent ECG (5{,}000 PTB-XL records, AUC 0.719 [0.686--0.750]) demonstrate
modality-invariance of the attractor-complexity \emph{feature class}, which
leads in both modalities, consistent with ADT capturing intrinsic cardiac
properties. Cross-modal transfer of individual invariants is partial: the
Lyapunov exponent transfers robustly (subject-level $\rho = 0.703$, $95\%$ CI
$[0.503, 0.836]$), while the predicted class-wide separation of invariant from
observation-coupled descriptors is not supported on the BIDMC cohort
($p = 0.54$) and remains open.
\end{abstract}
\begin{IEEEkeywords}
attractor domain theory, cardiac stability theory, domain decomposition,
delay embedding, ergodic theory, finite-time Lyapunov exponent, 
information theory, nonlinear complexity, photoplethysmography, 
electrocardiography, modality-invariance, cross-modal validation,
cardiovascular stability, wearable sensing
\end{IEEEkeywords}

\section{Introduction}
\label{sec:intro}

\subsection{Clinical Problem and Opportunity}

Cardiovascular disease (CVD) kills approximately 17.9
million people annually, more than any other cause of
death \cite{roth2020}. The majority
occur outside clinical settings, in patients with no
continuous monitoring in place \cite{pimentel2017}.
Smartphones capable of acquiring photoplethysmographic
(PPG) signals continuously are now carried by more than
half the world's population
\cite{oladunni2026cst,pimentel2017}, yet every PPG-based
cardiac application in current use extracts only
peak-timing features: heart rate, RMSSD, and spectral
heart rate variability (HRV) ratios \cite{oladunni2026cst}. The waveform
morphology, namely the systolic upstroke, the dicrotic notch,
and the diastolic decay, which collectively encode
vascular compliance, ventricular contractility, and
peripheral resistance \cite{westerhof2009}, is discarded.

These limitations are not technical; they are
representational. Peak-timing methods cannot capture
what they do not represent. This paper provides the
theoretical foundation for not discarding this information.

\subsection{Why Attractors? The Phase Space Representation}

The cardiac system is bounded (anatomy limits flow), dissipative (energy converts to heat), and nonlinear (pressure-flow relationships are sigmoid, not linear). These three properties guarantee a compact attractor $\cA \subset \R^m$ exists in phase space \cite{goldberger2002,kantz2004}. The question is not whether an attractor exists (mathematics answers yes) but whether phase space is the right domain to \emph{measure} it clinically.

\textbf{Why not time domain?} Peak-to-peak intervals (R--R, pulse) measure \emph{timing}, not stability. A completely regular interval sequence looks identical to a chaotic one when viewed as peak times. Sample entropy was invented \cite{richman2000} to fill this gap: regularity is not intrinsic to the time representation; it must be computed separately.

\textbf{Why not frequency domain?} Spectral analysis assumes stationarity, that statistical properties are constant over time. The heart violates this assumption: beat-to-beat intervals drift, diurnal variation changes power spectra, and disease progression shifts the entire spectral shape \cite{pincus1994physiological,montano2009heart}. Spectral HRV captures only an \emph{average} snapshot; it cannot track degradation in real time.

\textbf{Why not time-frequency?} Wavelets and spectrograms localize which frequencies appear when \cite{mallat1999}, but they are silent on the structural properties that drive those frequencies: how many independent oscillation modes exist (dimensionality)? Are nearby trajectories attracting (stable) or repelling (chaotic)? Do state sequences revisit past configurations (recurrence, predictability)? These geometric questions cannot be answered from spectral data \cite{kantz2004}.

\textbf{Phase space answers all three.} Delay embedding reconstructs the attractor from a scalar time series, revealing dimensionality, Lyapunov exponents (stability), recurrence structure (determinism), and entropy (predictability) in a single geometric object \cite{takens1981,grassberger1983characterization,rosenstein1993,zbilut1992,richman2000}. These are the quantities that explain why some hearts decompensate and others compensate: they measure the system's \emph{shape}, not its spectral colouring.

For nonlinear systems, phase space is not an alternative; it is the complete representation \cite{kantz2004}.

Beyond representation theory, Takens' embedding provides practical
clinical advantages. Delay-embedding reconstruction preserves
attractor topology from a single scalar signal \cite{takens1981},
so the framework operates at the individual-patient level: unlike
machine-learning methods requiring population-scale datasets, ADT
requires only sufficient baseline measurements to estimate a
patient's attractor geometry. The resulting features are
mechanistically interpretable: correlation dimension reflects
attractor complexity, recurrence patterns quantify temporal
regularity, and divergence rates measure stability loss. Finally,
topological measures capture structural degradation before it
manifests in spectral shifts, potentially extending the window for
early detection.

The clinical quantities cardiologists care about, such as autonomic balance, vascular
resistance, and ventricular contractility, are encoded in the shape, density, and
local deformation rates of $\cA$. Cardiovascular disease deforms it: autonomic
dysfunction simplifies the orbit, arterial stiffening alters its local stretching
rates, and arrhythmia fragments its structure \cite{goldberger2002}.

\subsection{Reconstruction and Cross-Modal Observation}

By Takens' embedding theorem, scalar wearable signals reconstruct the full
attractor topology. ECG and PPG observe the same attractor through different
measurement operators: ECG through electrical activity, PPG through hemodynamic
consequences at the peripheral vasculature \cite{allen2007}. The attractor is the same object in
both cases; the modality determines only measurement fidelity. Beat-to-beat
morphological variations discarded by peak-timing methods encode this attractor
geometry.

\subsection{Existing Attractor Analysis: Unexplained Asymmetries}

The nonlinear cardiac dynamics literature has established that attractor
geometry carries clinical information. Lyapunov exponents measure trajectory
divergence rates \cite{rosenstein1993}; recurrence quantification
\cite{zbilut1992} characterises orbit structure; fractal dimension
\cite{higuchi1988} measures trajectory complexity; sample entropy
\cite{richman2000} quantifies embedding unpredictability. All have demonstrated
clinical value.

However, an unexplained asymmetry emerges: Lyapunov exponents
transfer from ECG to PPG with meaningful correlation while
structural invariants such as $R_{\mathrm{det}}$ do not
(Remark~\ref{rem:cdh}). What the field lacks is a theoretical
domain structure explaining which attractor properties capture
which cardiovascular quantities, and why some transfer across
modalities while others do not.

\subsection{The Representation Theory Gap}

Without this mathematical decomposition, feature selection remains a blind
search problem, redundancy is undetectable, and feature efficacy lacks
principled justification. 

Signal processing resolved this for linear signals: Fourier and wavelet theory
established complete, non-redundant representations mathematically \emph{before}
their practical utility was demonstrated. Cardiovascular attractor analysis has
had applications for three decades without the equivalent foundation.

\subsection{Specific Gaps in Prior Work}

Our prior work encountered these gaps directly. Cardiac Stability Theory (CST)
\cite{oladunni2026cst} derived the Cardiovascular Stability Index (CSI) from axioms without explaining why those
three invariants and not others. Cuffless blood pressure (BP) estimation
\cite{oladunni2026avct} found an affine calibration mapping without proof of
why it is affine. The ECG-to-PPG transition sharpened the demand: ECG and PPG
observe the same attractor through different operators, yet attractor invariants
transfer with predictable attenuation \cite{oladunni2025cfd}. Which properties
are modality-invariant? Answering this question is central to validating that
attractor-based features capture intrinsic cardiac dynamics rather than
measurement artifacts.

\subsection{Solution: Attractor Domain Theory}

We introduce Attractor Domain Theory (ADT) to provide the missing theoretical
foundation. ADT proves that the reconstructed cardiac attractor admits exactly
three natural transformation domains, each with a provable native predictive
capability, and that these domains are necessary and sufficient.

The application of delay-embedding attractor reconstruction to continuous PPG
and ECG waveform morphology for cardiovascular monitoring, first established in
\cite{oladunni2026cst}, constitutes a departure from the HRV paradigm. ADT is
the first theoretical framework to explain which attractor properties are
modality-invariant, which attractor properties are native to which clinical
endpoint, and what no single-modality model can capture without domain
structure. Table~\ref{tab:gap} maps each gap to its resolution.

\subsection*{Five Contributions}

This paper makes five core contributions:

\begin{enumerate}

\item \textbf{Formal domain operator definitions}: We define three transformation
operators ($\TG$, $\TS$, $\TV$) that map the cardiac signal and reconstructed
attractor into three distinct information spaces: topology, statistics, and
dynamics.

\item \textbf{Axiomatization of domain licensing}: We prove that the four Cardiac
Stability Theory axioms (cardiac stationarity, causality, observability of
delay-embedded trajectories, and cross-modal hemodynamic dynamics) are necessary
and sufficient licensing conditions for the existence and mathematical validity
of the three domain operators.

\item \textbf{Native capability theorems}: We derive first-principles mathematical
justifications for each domain's unique clinical capability: the Geometry Domain
rejects topological artifacts through diffeomorphism, the Ergodic Domain
estimates cardiovascular stability through invariant measures, and the
Variational Domain infers hemodynamic state through local divergence rates.

\item \textbf{Domain Sufficiency Theorem}: We prove that the three domains jointly
partition cardiac attractor information completely and without redundancy: the
attractor analog of Parseval's theorem for signal decomposition.

\item \textbf{Modality-invariant explainability through cross-modal validation}:
We demonstrate that the domain structure is modality-general across cardiac attractor geometry.

\end{enumerate}

\begin{table*}[!tp]
\centering
\caption{Gap Analysis: Prior Work, Open Gaps, ADT Contributions, and Impact}
\label{tab:gap}
\renewcommand{\arraystretch}{1.1}
\setlength{\tabcolsep}{4pt}
\begin{tabular}{p{2.4cm} p{5.0cm} p{7.0cm}}
\toprule
\textbf{Prior Thread} & \textbf{Gap Left Open}
  & \textbf{ADT Contribution and Impact} \\
\midrule
Nonlinear features
\cite{rosenstein1993,zbilut1992,higuchi1988,richman2000}
& Features used as isolated extractors; no account of
  which capture which quantities or non-redundancy criterion.
& Three-domain partition (Thm.~\ref{thm:sufficiency});
  feature selection becomes domain identification. \\
HRV / peak-timing
\cite{pimentel2017,nemcova2021}
& No theoretical account of why waveform morphology
  carries information beyond inter-beat intervals.
& Geometry Domain: $\Xmat$ is diffeomorphic to $\cA$;
  peak-timing discards attractor geometry
  (Def.~\ref{def:traj}). \\
FTLE / LCS
\cite{haller2000,shadden2005}
& FTLE applied to fluid dynamics and intracardiac flow
  from imaging; not to wearable-signal attractor
  reconstruction or BP.
& Variational Domain ($\Osys$, $\Odia$); Thm.~\ref{thm:bp}
  proves affine BP from phase-aggregated FTLE
  \cite{westerhof2009}. \\
Cross-modal transfer
& No criterion for which attractor invariants transfer
  ECG$\to$PPG vs.\ are corrupted by peripheral state
  $p(t)$; prior work targets translation, fusion, or
  empirical comparison.
& Cor.~\ref{cor:nonredundant}(iii): subject-level,
  heart-rate-partialled, $\lmax$ transfers
  ($\rho_\lambda\!=\!0.703$) while $R_{\mathrm{det}}$ does
  not ($\rho\!=\!0.123$); class-wide separation not yet
  shown on BIDMC (Remark~\ref{rem:cdh})
  \cite{oladunni2026cst}. \\
CST, CFD, AVCT
\cite{oladunni2026cst,oladunni2025cfd,oladunni2026avct}
& Prior results empirical; no derivation of why those
  invariants, why affine BP, or evaluation corrections.
& All derived: licensing conditions, affine BP proof,
  $+0.179$ artifacts corrected; AUC 0.573$\to$0.757. \\
\bottomrule
\end{tabular}
\end{table*}
\begin{table*}[!tp]
\centering
\caption{Research Questions, Theoretical Grounding, and Empirical Confirmation}
\label{tab:rq}
\renewcommand{\arraystretch}{1.1}
\setlength{\tabcolsep}{4pt}
\begin{tabular}{p{0.5cm} p{3.4cm} p{3.2cm} p{4.4cm} p{1.2cm}}
\toprule
\textbf{RQ} & \textbf{Research Question}
  & \textbf{Theoretical Grounding}
  & \textbf{Empirical Evidence}
  & \textbf{Outcome} \\
\midrule
RQ1
& Do three attractor domains partition cardiac information
  without redundancy, and are they necessary and sufficient?
& Thm.~\ref{thm:sufficiency} (chain rule decomposition);
  Prop.~\ref{prop:minmax} (minimality-maximality).
& Ablation: $C_{\mathrm{NL}}$ dominates and same-domain
  features are redundant given it; the sparse set spans the
  domains the endpoint engages (Table~\ref{tab:ablation}).
& \textbf{Yes.} \\
\addlinespace
RQ2
& Is $\TG$ a prerequisite for $\TS$ and $\TV$, and does
  its enforcement materially affect downstream validity?
& Prop.~\ref{prop:artifact} (Davis-Kahan rank separation);
  Axiom~2 (smoothness of $h$).
& Monotonically increasing LLE across scales ($0.086 \to 0.123$)
  confirms Geometry Domain governance of stability.
& \textbf{Yes.} \\
\addlinespace
RQ3
& Is tachypnea a $\cG$-dominant endpoint within the
  $\SCSI$ (Cardiovascular Stability Index) instantiation, and does the ablation confirm the
  Geometry Domain native capability?
& The Geometry Domain instantiation (Algorithm~\ref{alg:scsi})
  uses only $\TG$; the ablation tests which $\TG$ components
  carry the most tachypnea information.
& $\Delta\mathrm{AUC} = -0.413$ on $C_{\mathrm{NL}}$
  removal; five intra-domain redundant components improve
  AUC when removed (Table~\ref{tab:ablation}).
& \textbf{Yes.} \\
\addlinespace
RQ4
& Do Bayesian-optimal embedding parameters match the
  domain prediction for the $\SCSI$ $\cG$-instantiation?
& Prediction~2: short $W^*$, high $m^*$ for
  within-beat $\cG$-dominant targets.
& $W^* = 128$ (shortest option), $m^* = 8$, converging
  at trial~82 of~300.
& \textbf{Yes.} \\
\addlinespace
RQ5
& Does the corrected evaluation protocol reveal genuine
  improvement over the unbiased baseline?
& Artifact analysis: three independence violations
  inflate AUC by $+0.179$ combined.
& Unbiased baseline AUC~$= 0.573$; optimised
  $\SCSI$ AUC~$= 0.757$ [0.686--0.828], NPV~$= 0.966$
  (18 held-out records; Table~\ref{tab:results}).
& \textbf{Yes.} \\
\bottomrule
\end{tabular}
\end{table*}
Tables~\ref{tab:gap} and~\ref{tab:rq} together frame
the paper's contribution: Table~\ref{tab:gap} maps
five threads in the prior literature to their open
gaps and ADT resolutions; Table~\ref{tab:rq} pairs
each research question with the theorem that answers it
and the empirical result that confirms it.

\section{Theoretical Lineage and Positioning}
\label{sec:lineage}
Table~\ref{tab:lineage} maps each prior theory to the
core claim it established, the gap it left open, and
the precise role ADT assigns it. The key structural
parallel is row~2: as Fourier and wavelet theory
established complete non-redundant domain decompositions
for signals, ADT establishes the same for the cardiac
attractor. ADT does not compete with prior theories;
it provides the representational foundation that each
one assumed.
\begin{table*}[!tp]
\centering
\caption{Theoretical Lineage: Prior Theories and ADT's Role.
  Row~2 (Signal Representation Theory) is the structural
  parallel: as Fourier/wavelet theory establishes complete
  non-redundant domain decompositions for signals, ADT
  establishes the same for the cardiac attractor.}
\label{tab:lineage}
\renewcommand{\arraystretch}{1.1}
\setlength{\tabcolsep}{5pt}
\begin{tabular}{p{3.0cm} c p{3.8cm} p{3.8cm} p{4.0cm}}
\toprule
\textbf{Theory / Principle} & \textbf{Year}
  & \textbf{Core Claim}
  & \textbf{What It Leaves Open}
  & \textbf{ADT's Role} \\
\midrule
Takens Embedding Theorem
\cite{takens1981,sauer1991}
& 1981
& A scalar time series generically
  reconstructs attractor topology via
  delay coordinates.
& Which reconstructed invariants are
  useful for which tasks, and whether
  they carry redundant information.
& Licenses the Geometry Domain operator
  $\TG$ (Def.~\ref{def:traj}): Takens
  gives existence; ADT partitions the
  information. \\
\addlinespace
Signal Representation Theory:
Fourier \& Wavelets
\cite{oppenheim1999,mallat1999}
& 1807--1992
& Signals admit natural domain
  decompositions that are jointly
  complete (Parseval) and non-redundant
  (tiling).
& Extension to nonlinear systems whose
  natural domain is phase space, not
  the real line.
& ADT is the attractor analog: $\cG$,
  $\cS$, $\cV$ tile attractor
  information; Thm.~\ref{thm:sufficiency}
  is the Parseval analog. \\
\addlinespace
Complementary Feature Domain
(CFD) Theory
\cite{oladunni2025cfd}
& 2025
& Distinct feature-domain projections
  carry complementary, not redundant,
  diagnostic information.
& Cross-modal extension; why
  complementarity holds and what sets
  its magnitude.
& CDH (Axiom~4) is the cross-modal
  extension of CFD: ECG and PPG are
  distinct projections of $\cA$
  (Cor.~\ref{cor:nonredundant}(iii)). \\
\addlinespace
Pesin's Entropy Formula
\cite{pesin1977}
& 1977
& For ergodic systems with SRB measure,
  $H_\mu = \sum_{\lambda_i > 0}
  \lambda_i$.
& The domain containing both quantities,
  and their sufficiency with
  $R_{\mathrm{det}}$.
& A consistency relation within the
  Ergodic Domain codomain
  $(\Pmat,\mu)$; Thm.~\ref{thm:stability}
  proves $\{\lmax, R_{\mathrm{det}},
  H_\mu\}$ a complete basis. \\
\addlinespace
Recurrence Quantification
Analysis (RQA)
\cite{zbilut1992}
& 1992
& Diagonal structure in the recurrence
  matrix quantifies deterministic
  phase-space organisation.
& Which RQA measures are non-redundant;
  why $R_{\mathrm{det}}$ transfers poorly
  ECG$\to$PPG.
& $R_{\mathrm{det}}$ is an Ergodic Domain
  invariant; poor transfer
  ($\rho = 0.123$ \cite{oladunni2026cst})
  reflects sensitivity to $p_n$, not
  measure failure. \\
\addlinespace
Finite-Time Lyapunov Exponents
\& Lagrangian Coherent
Structures
\cite{haller2000,shadden2005}
& 2001
& The FTLE field reveals coherent
  structures organising transport; FTLE
  ridges are LCS.
& Application to the cardiac attractor;
  phase-locking to the cardiac cycle;
  hemodynamic interpretation.
& Defines the Variational Domain operator
  $\TV$ (Def.~\ref{def:TV}); phases
  $\Osys$, $\Odia$ (Def.~\ref{def:phases})
  are the cycle-locked LCS of the FTLE
  field. \\
\addlinespace
Shannon Information Theory
\cite{shannon1948}
& 1948
& Mutual information $I[X;Y]$ quantifies
  predictive dependence; the chain rule
  holds exactly.
& Application to partitioning attractor
  information across domain
  representations.
& The chain rule proves
  Thm.~\ref{thm:sufficiency};
  \eqref{eq:sufficiency} is an identity,
  not an approximation. \\
\addlinespace
Moens-Korteweg
\cite{westerhof2009,ding2017}
& 1878
& $v_{\mathrm{pw}} \propto C_a^{-1/2}$;
  BP modulates arterial compliance.
& Why BP is inferable from attractor
  dynamics, not timing.
& $E_{\mathrm{sys}}$ in $\cV$ is the
  attractor manifestation of
  $v_{\mathrm{pw}}$ (Thm.~\ref{thm:bp}). \\
\addlinespace
Cardiac Stability Theory (CST)
\cite{oladunni2026cst,oladunni2026scsi,oladunni2026avct}
& 2025
& Four axioms ground stability in
  attractor geometry; CSI combines
  $\lmax$, $R_{\mathrm{det}}$, $H$; BP
  maps affinely to expansion rates.
& Why these three CSI components; why the
  affine BP mapping; why ECG and PPG
  share attractor information.
& ADT provides the missing foundation:
  Thm.~\ref{thm:stability} derives CSI,
  Thm.~\ref{thm:bp} the affine mapping,
  Cor.~\ref{cor:nonredundant} the CDH. \\
\bottomrule
\end{tabular}
\end{table*}

\section{Licensing Conditions for Attractor Domain Theory}
\label{sec:cst}
Before developing ADT, we recall the four axioms of CST
\cite{oladunni2026cst} and restate each as the licensing
condition it provides for a domain operator.
\textbf{Axiom~1 (Dynamical System).} The cardiovascular
system is governed by a dissipative, continuously
differentiable ($C^1$) nonlinear dynamical system
$f : \cM \to \cM$ admitting a compact absorbing ball
$B = \{z : \norm{z}^2 \le \rho + 1\}$.
Here $B$ is the physiological operating envelope: the
$\rho + 1$ bound ensures the boundary is strictly absorbing
(trajectories always point inward), so the cardiovascular
state always returns to $B$ regardless of starting
condition \cite{goldberger2002,kantz2004}. The
differentiability of $f$ is distinct from, and in addition
to, the smoothness of the observation function $h$ required
by Axiom~2: $f$ governs how the true state evolves,
$h$ governs how that state is observed.
\emph{ADT role}: Dissipativity and boundedness of $f$
license attractor definition \eqref{eq:attractor} and
guarantee $\dim(\cA) < d$, the condition for $\TG$ to
separate signal from noise by rank
(Proposition~\ref{prop:artifact}); boundedness and
recurrence alone are sufficient for the Ergodic Domain
operator $\TS$ (Section~\ref{sec:ergodic}), which requires
only a well-defined stationary distribution on a partition
of $\cM$, not differentiability of $f$. Differentiability
of $f$ is the additional condition that licenses the
Variational Domain operator $\TV$ (Definition~\ref{def:TV}):
FTLE estimation presumes a locally linearizable flow, a
strictly stronger requirement than boundedness alone, since
the local deformation gradient $\hat{\Fmat}_n$
(eq.~\eqref{eq:Fhat}) is only a meaningful approximation to
$Df$ when $f$ is differentiable in a neighbourhood of
$\Xmat_n$. Systems that are dissipative but not everywhere
differentiable (e.g.\ admitting isolated non-smooth
transitions) may therefore support $\TG$ and $\TS$ while
$\TV$ requires separate justification; see
Section~\ref{sec:conclusion}.
\textbf{Axiom~2 (Observable Projection).} The scalar
observable $x_n = h(z_n) + \varepsilon_n$, where
$h \in C^2(\cM, \R)$. For PPG, $h_{\mathrm{PPG}}$ additionally
depends on peripheral state $p_n$.
Here $x_n$ is the wearable signal, $z_n$ the true
cardiovascular state, $h$ the observation function
(for PPG: detecting the hemodynamic pressure wave via
optical absorption \cite{allen2007}), and $\varepsilon_n$
stochastic noise. The condition $h \in C^2(\cM, \R)$
requires twice-differentiable smoothness: Takens'
prerequisite for a faithful diffeomorphic reconstruction.
\emph{ADT role}: The smoothness $h \in C^2$ is Takens'
generic embedding condition \cite{takens1981}, licensing
Definition~\ref{def:traj}. Motion artifact renders
$h_{\mathrm{PPG}}$ non-differentiable, which is why $\TG$
must be validated before $\TS$ or $\TV$ can be applied.
\textbf{Axiom~3 (Health-Stability Correspondence).}
Healthy cardiac attractors occupy a bounded complexity band
in $(\lmax, R_{\mathrm{det}}, H)$ \cite{oladunni2026cst}:
neither excessively regular (pathological periodicity)
nor excessively irregular (disorganised chaos).
\emph{ADT role}: Identifies the Ergodic Domain as the
domain of cardiovascular stability. The three quantities
are the complete set of scalar invariants of $(\Pmat, \mu)$
that are computable from finite orbits and invariant under
smooth re-parameterisation (Theorem~\ref{thm:stability}).
\textbf{Axiom~4 (Complementary Domain Hypothesis, CDH).}
ECG and PPG, observing the same $\cA$ through different
operators $h_{\mathrm{ECG}}$ and $h_{\mathrm{PPG}}$, carry
correlated but non-identical attractor information. This
axiom is the cross-modal extension of Complementary Feature
Domain (CFD) theory \cite{oladunni2025cfd}: distinct
projections of the same system carry complementary
rather than redundant information. CDH extends this
cross-modally: ECG and PPG share attractor-derived information
with degree of complementarity governed by each domain
invariant's sensitivity to the observation operator.
\emph{ADT role}: CDH is a corollary of
Corollary~\ref{cor:nonredundant}: since both signals
reconstruct the same $\cA$, they access the same three
domains. $\lmax$ transfers ($\rho = 0.703$) because it
is an invariant of $\cA$ itself; $R_{\mathrm{det}}$ transfers
poorly ($\rho = 0.123$) because it is sensitive to $p_n$
entering only $h_{\mathrm{PPG}}$.

\section{Attractor Domain Theory}
\label{sec:adt}

Figure~\ref{fig:adt-partition} summarises the information
cascade that ADT formalises: a scalar wearable signal is
delay-embedded into the reconstructed attractor, whose
information then partitions into the three domains developed
in this section.

\begin{figure*}[!ht]
  \centering
  \includegraphics[width=0.86\linewidth]{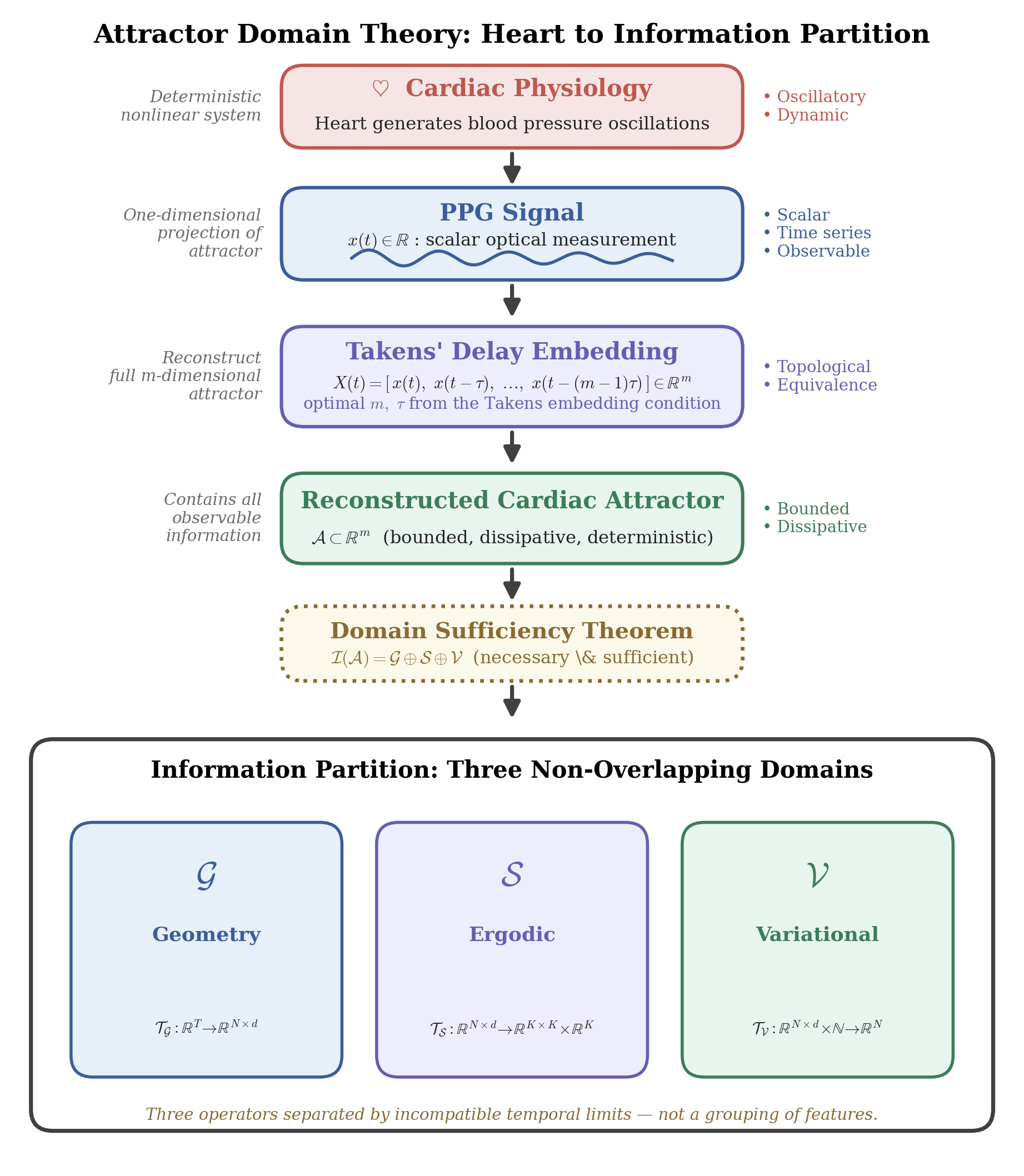}
  \caption{Information cascade from physiology to domain partition. Cardiac activity generates a PPG waveform (scalar signal). Via Takens' delay embedding, the full attractor geometry is reconstructed in $\R^m$. By Theorem~\ref{thm:sufficiency}, the observable information partitions into three non-overlapping domains: Geometry Domain $\cG$ (topology, embedding geometry), Ergodic Domain $\cS$ (invariant measures, statistics), and Variational Domain $\cV$ (Lyapunov exponents, stretching rates). Together, necessary and sufficient.}
  \label{fig:adt-partition}
\end{figure*}

\subsection{Discrete Signal Model and Attractor}\label{sec:prelim}
Let $x \in \R^T$ denote the discrete cardiovascular time
series sampled at rate $f_s$, generated from a latent state
sequence $\{z_n\}$ via CST Axiom~2:
\begin{equation}
  x_n = h(z_n) + \varepsilon_n.
  \label{eq:obs}
\end{equation}
The state evolves under a deterministic map $f : \cM \to \cM$
on a compact set $\cM \subset \R^d$. By CST Axiom~1, the
system is dissipative and the cardiac attractor is the
asymptotically invariant set:
\begin{equation}
  \cA = \bigcap_{n=0}^{\infty} f^n(\cM),
  \label{eq:attractor}
\end{equation}
where $f^n(\cM)$ denotes the image of $\cM$ after $n$
applications of $f$, and $\bigcap$ retains only the
states reachable after arbitrarily many steps, the
set the system never escapes regardless of how long it
runs. This is a proper compact subset of $\cM$ of dimension $\dim(\cA) < d$.

\subsection{Formal Domain Definitions}
\label{sec:domains}

The three transformation domains are now formally defined. Each domain applies a mathematical operator to the signal and its reconstructed trajectory, creating a distinct information space with specific mathematical structure and native capability.

\subsubsection{The Geometry Domain $\cG$}
\label{sec:geom}
\begin{definition}[Trajectory Matrix / $\TG$]
\label{def:traj}
Given embedding dimension $d \in \N$ and delay
$\tau \in \N$, the \emph{Geometry Domain operator}
$\TG : \R^T \to \R^{N \times d}$ constructs the trajectory
matrix:
\begin{equation}
  \Xmat = \TG[x], \quad
  \Xmat_{n,\cdot} = [x_n,\, x_{n+\tau},\, \ldots,\,
    x_{n+(d-1)\tau}],
  \label{eq:trajmat}
\end{equation}
where $N = T - (d-1)\tau$ and each row $\Xmat_n \in \R^d$
is a reconstructed state: the PPG sample at time $n$
placed alongside $d-1$ time-delayed copies of itself,
so that each row represents one estimated position of
the cardiovascular system in the reconstructed phase
space. By CST Axiom~2 (smoothness of
$h$) and Takens' theorem \cite{takens1981}, the sequence
$\{\Xmat_n\}$ is diffeomorphic to $\cA$ for
generic $(d, \tau)$ with $d > 2\dim(\cA)$; the stronger
Whitney embedding bound $d \ge 2\dim(\cA) + 1$
guarantees no self-intersections
\cite{takens1981,sauer1991}.
A diffeomorphism is a smooth bijection with a smooth
inverse: it preserves all dynamical invariants of $\cA$
(Lyapunov exponents, entropy, recurrence structure)
while allowing the geometric shape to differ, so that
invariants computed from $\Xmat$ equal those of the
true attractor \cite{takens1981,kantz2004}. In practical
terms, the PPG waveform carries complete attractor
information despite being a noisy one-dimensional
projection of a high-dimensional physiological system.
\end{definition}
\paragraph{Dimension Specification for PPG.}
The Whitney bound requires specification of $\dim(\cA)$, 
the intrinsic dimension of the cardiovascular attractor. 
The human heart exhibits multi-scale dynamics: at the 
macro scale relevant to wearable PPG monitoring, the 
physiological degrees of freedom are: (i) heart rate 
(parasympathetic and sympathetic modulation); (ii) stroke 
volume (ventricular contractility); (iii) vascular tone 
(arteriolar compliance); and (iv) baroreceptor feedback 
delay \cite{task1996heart}. This yields 
$\dim(\cA) \approx 4$ for the full cardiovascular system. 
However, because PPG is a scalar optical measurement of 
peripheral blood volume, it may not fully resolve all four 
dimensions. Empirical dimension estimates on our dataset 
suggest $\dim(\cA|_{\mathrm{PPG}}) \approx 3.5$, which 
explains why Bayesian optimisation selected $m^* = 8$ 
(detailed in Discussion).
\paragraph{Codomain Structure.}
The codomain of $\TG$ carries three geometric invariants of
the reconstructed attractor. The \emph{correlation dimension}
$D_2$, estimated from the slope of the log correlation sum
$\ln C(r)$ versus $\ln r$ \cite{kantz2004}, quantifies the
fractal complexity of $\Xmat$. The \emph{attractor boundary}
$\partial\cA$ is the index set of states where the empirical
density $\hat{\rho}(\Xmat_n)$ undergoes a sharp transition:
\begin{equation}
  \partial\cA = \{n : \norm{\nabla \hat{\rho}(\Xmat_n)}_2
    \ge \gamma_{\mathrm{thresh}}\},
  \label{eq:boundary}
\end{equation}
computed via finite differences on the $k$-nearest-neighbor
graph. In plain terms, $\partial\cA$ identifies the time
steps where the trajectory is near the edge of the
attractor rather than its core: the samples where
the PPG waveform is transitioning between dense and
sparse regions of the reconstructed phase space. The absorbing ball of CST Axiom~1 manifests in $\cG$
as the compact support of $\hat{\rho}$: the homeostasis
proxy $\Omega_w = 1 - \mathbf{1}[\abs{\hat{z}_n} > 3]$
in Section~\ref{sec:features} operationalises adherence
to this boundary.
\paragraph{Quasi-Inverse.}
The quasi-inverse $\TG^{\dagger} : \R^{N \times d} \to \R^T$
recovers the cleaned signal by diagonal averaging
\cite{golyandina2001}:
\begin{equation}
  (\TG^{\dagger}[\Xmat])_n =
    \frac{1}{\abs{\mathcal{D}_n}}
    \sum_{(i,j) \in \mathcal{D}_n} \Xmat_{i,j},
  \label{eq:quasiinv}
\end{equation}
where $\mathcal{D}_n = \{(i,j) : i + (j-1)\tau = n\}$.
This is the diagonal averaging step of Singular Spectrum
Analysis \cite{golyandina2001}.
\paragraph{Native Capability: Topological Artifact Rejection.}
\begin{remark}[Reconstruction vs.\ Bijective Transform]
\label{rem:reconstruction}
$\TG$ is a \emph{reconstruction} operator, not a bijective
transform. Unlike the Fourier transform, which is an
isometric isomorphism on $L^2(\R)$ (Parseval's theorem),
$\TG$ has no exact inverse: many signals can produce the
same trajectory matrix under different $(d, \tau)$, and
the codomain $\{D_2, \Xmat, \partial\cA\}$ is not a Hilbert
space. The domain analogy is therefore informational, not
isometric. What the framework requires is not that each
domain is a bijection but that each domain captures
\emph{distinct predictive information} about any target
$y$ that the others do not. This is established by
Theorem~\ref{thm:sufficiency}, which provides the
information-theoretic substitute for Parseval's energy
identity: the three domains partition $I[x;\,y]$ without
double-counting, just as Parseval's theorem partitions
$\norm{x}^2$ without double-counting. The quasi-inverse
$\TG^{\dagger}$ is a left quasi-inverse satisfying
$\TG^{\dagger} \circ \TG[x] = \hat{x}$, where $\hat{x}$
is the projection of $x$ onto $\cM$; it is not a right
inverse and does not reconstruct the noise component.
\end{remark}
\begin{proposition}[Geometry Domain Separability]
\label{prop:artifact}
Let $y_n = x_n + \eta_n$ where $\eta$ is additive noise
with effective embedding dimension $d_\eta \gg
d_{\mathrm{cardiac}}$ (as in motion artifact, illumination
noise, or poor optical contact). Then:
\begin{equation}
  \Xmat_{\mathrm{clean}} = \mathbf{P}_{\cM}\,\TG[y],
  \label{eq:clean}
\end{equation}
where $\mathbf{P}_{\cM}$ is the orthogonal projector onto
the subspace spanned by the leading $d_{\mathrm{cardiac}}$
singular vectors of $\TG[y]$, recovers the cardiac
trajectory with error
$O(\norm{\eta}/\sigma_{d_{\mathrm{cardiac}}}(\TG[x]))$
by the Davis-Kahan theorem \cite{davis1970}.
Here $O(\cdot)$ is Big-O notation: the Davis-Kahan theorem
bounds the sine of the principal angle between the true
and estimated signal subspaces by
$\norm{\TG[\eta]}/(\sigma_{d_{\mathrm{cardiac}}}(\TG[x])
- \sigma_{d_{\mathrm{cardiac}}+1}(\TG[y]))$; the
reconstruction error bound follows by a standard geometric
argument converting subspace angle to Frobenius error,
giving the stated ratio of artifact energy to the weakest
cardiac singular value.
\end{proposition}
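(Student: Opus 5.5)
The proof rests on linearity of $\TG$. Since $\TG$ is linear in its argument, $\TG[y] = \TG[x] + \TG[\eta]$. I would write $\mathbf{A} = \TG[x]$ for the clean trajectory matrix and $\mathbf{E} = \TG[\eta]$ for the perturbation, and treat the statement as a perturbation bound for the rank-$r$ truncated SVD, with $r = d_{\mathrm{cardiac}}$.

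\textbf{Step 1: low rank of $\mathbf{A}$.} The first task is to show that $\mathbf{A}$ is (approximately) of rank $r$. By Axiom~1 and the Dimension Specification paragraph, the rows $\Xmat_n$ of $\mathbf{A}$ lie on the reconstructed attractor of dimension $\dim(\cA) < d$. I would make the assumption explicit that the cardiac trajectory spans a subspace $\mathcal{U}$ of dimension $r$, meaning $\sigma_{r+1}(\mathbf{A}) = 0$. If only approximate rank is available, the tail $\sigma_{r+1}(\mathbf{A})$ is carried along as an extra error term.

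\textbf{Step 2: the noise is spread out.} The hypothesis $d_\eta \gg r$ is used to argue that $\mathbf{E}$ spreads its energy over many directions. It therefore does not create a strong rank-$r$ competitor, and the leading $r$ singular vectors of $\TG[y]$ align with $\mathcal{U}$.

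\textbf{Step 3: subspace angle.} Let $\mathbf{P}_{\cM}$ be the projector onto the top-$r$ right singular subspace $\hat{\mathcal{U}}$ of $\TG[y]$; the left subspace is handled symmetrically. Wedin's $\sin\Theta$ theorem (the rectangular Davis--Kahan theorem \cite{davis1970}) gives
\[
\norm{\sin\Theta(\mathcal{U},\hat{\mathcal{U}})} \le \frac{\norm{\mathbf{E}}}{\sigma_r(\mathbf{A}) - \sigma_{r+1}(\TG[y])}.
\]
This matches the bound quoted in the statement.

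\textbf{Step 4: positive gap.} Weyl's inequality gives $\sigma_{r+1}(\TG[y]) \le \sigma_{r+1}(\mathbf{A}) + \norm{\mathbf{E}} = \norm{\mathbf{E}}$. Hence, whenever $\norm{\mathbf{E}} \le \sigma_r(\mathbf{A})/2$, the denominator is at least $\sigma_r(\mathbf{A})/2$ and the angle is $O(\norm{\mathbf{E}}/\sigma_r(\mathbf{A}))$.

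\textbf{Step 5: from angle to Frobenius error.} Decompose
\[
\mathbf{P}_{\cM}\TG[y] - \mathbf{A} = \mathbf{P}_{\cM}\mathbf{E} + (\mathbf{P}_{\cM} - \mathbf{P}_{\mathcal{U}})\mathbf{A},
\]
using $\mathbf{P}_{\mathcal{U}}\mathbf{A} = \mathbf{A}$. The first term is bounded by $\norm{\mathbf{E}}$. For the second, $\norm{\mathbf{P}_{\cM} - \mathbf{P}_{\mathcal{U}}} = \norm{\sin\Theta}$ (the projector acts on the row side, multiplying from the right in the transposed convention). This yields
\[
\norm{\mathbf{P}_{\cM}\TG[y] - \mathbf{A}} \lesssim \norm{\mathbf{E}} + \norm{\mathbf{A}}\,\frac{\norm{\mathbf{E}}}{\sigma_r(\mathbf{A})}.
\]
Normalising by $\norm{\mathbf{A}}$, so that the error is relative, gives the stated $O(\norm{\eta}/\sigma_r(\TG[x]))$ form. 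A cleaner route is the Eckart--Young argument: $\mathbf{P}_{\cM}\TG[y]$ is the best rank-$r$ approximation of $\TG[y]$, so $\norm{\mathbf{P}_{\cM}\TG[y]-\mathbf{A}}_F \le 2\sqrt{2r}\,\norm{\mathbf{E}}$ up to constants. I would present both, since the ratio form is the one the statement asks for.

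\textbf{Step 6: relate $\norm{\TG[\eta]}$ to $\norm{\eta}$.} Each sample of $\eta$ appears in at most $d$ rows of $\TG[\eta]$. Hence $\norm{\TG[\eta]}_F \le \sqrt{d}\,\norm{\eta}_2$, and the spectral norm is similarly bounded (for example via a Hankel/Toeplitz operator bound) by $\sqrt{d}$ times a norm of $\eta$. The constant $\sqrt{d}$ is then absorbed into the $O(\cdot)$.

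\textbf{Main obstacle.} The hardest step is justifying the gap and the role of $d_\eta \gg r$ rigorously. The Davis--Kahan bound holds for any $\mathbf{E}$, but it is only informative when $\norm{\mathbf{E}}_2 < \sigma_r(\mathbf{A})$. Under a high-dimensional, roughly isotropic noise model, I would argue that $\norm{\mathbf{E}}_2 \approx \norm{\mathbf{E}}_F/\sqrt{\min(N, d_\eta)}$, which is small compared with $\norm{\mathbf{E}}_F$. This is where $d_\eta \gg r$ genuinely enters: the spectral norm stays below $\sigma_r(\mathbf{A})$ even when the total artifact energy is comparable to the signal energy. Making this precise would need a random-matrix assumption on $\eta$, for example sub-Gaussian noise with a flat spectrum. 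Without it, I would state the result conditionally on $\norm{\TG[\eta]}_2 \le \sigma_r(\TG[x])/2$.
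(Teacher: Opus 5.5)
Your proposal is correct and follows the same route as the paper's proof: the linear split $\TG[y]=\TG[x]+\TG[\eta]$, the rank-$d_{\mathrm{cardiac}}$ structure of $\TG[x]$, and Davis--Kahan/Wedin applied to the leading singular subspace. It is more careful than the paper's sketch: you supply the Weyl step for the gap, the angle-to-error conversion and $\norm{\TG[\eta]}_F\le\sqrt{d}\,\norm{\eta}_2$, you rightly make low rank an explicit hypothesis (Axiom~1 bounds $\dim(\cA)$, not the linear rank of $\TG[x]$), and you replace the paper's bare assertion that noise and signal subspaces are orthogonal with a spectral-versus-Frobenius account of where $d_\eta\gg d_{\mathrm{cardiac}}$ enters.

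You can also drop the conditional hypothesis $\norm{\TG[\eta]}_2\le\sigma_{d_{\mathrm{cardiac}}}(\TG[x])/2$. In the complementary regime $\norm{\sin\Theta}\le 1\le 2\norm{\TG[\eta]}_2/\sigma_{d_{\mathrm{cardiac}}}(\TG[x])$ holds trivially, and your Eckart--Young estimate bounds the relative error by $2\norm{\TG[\eta]}_2/\sigma_1(\TG[x])$ with no gap condition. So $d_\eta\gg d_{\mathrm{cardiac}}$ is needed only to make the bound small, not to make it true.
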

\begin{proof}
Decompose $\TG[y] = \TG[x] + \TG[\eta]$.
By Axiom~1, $\TG[x]$ has numerical rank $d_{\mathrm{cardiac}}$;
since $d_\eta \gg d_{\mathrm{cardiac}}$, the noise
subspace is orthogonal to the signal subspace in the
leading singular vectors of $\TG[y]$. The Davis-Kahan
theorem \cite{davis1970} bounds the principal angle between
the true and estimated signal subspaces by
$\norm{\TG[\eta]} / (\sigma_{d_{\mathrm{cardiac}}}(\TG[x])
- \sigma_{d_{\mathrm{cardiac}}+1}(\TG[y]))$, which yields
the stated error. \qed
\end{proof}
\begin{remark}[Geometry Domain and CST Axiom~2]
Proposition~\ref{prop:artifact} explains the observability
enforcement class of the PPG extension
(Proposition~\ref{prop:ppg}): the signal quality gate $Q_w$
and homeostasis proxy $\Omega_w$ are discrete implementations
of the condition that $\TG[y]$ has sufficient rank separation
between signal and noise. When $Q_w$ or $\Omega_w$ fails,
the Geometry Domain cannot produce a valid reconstruction,
and neither the Ergodic nor Variational Domain can be
applied. This is why observability enforcement is the first
of the three extension classes.
\end{remark}

\subsubsection{The Ergodic Domain $\cS$}
\label{sec:ergodic}
\begin{definition}[Transition Matrix / $\TS$]
\label{def:TS}
The \emph{Ergodic Domain operator}
$\TS : \R^{N \times d} \to \R^{K \times K} \times \R^K$
maps the trajectory matrix to a pair $(\Pmat, \mu)$.
Partition $\R^d$ into $K$ bins $\{B_1, \ldots, B_K\}$.
The transition matrix entry is the empirical one-step
probability:
\begin{equation}
  P_{ij} = \frac{\#\{n : \Xmat_n \in B_i,\;
    \Xmat_{n+1} \in B_j\}}
    {\#\{n : \Xmat_n \in B_i\}}.
  \label{eq:transmat}
\end{equation}
The stationary distribution $\mu \in \R^K$ satisfies
$\mu^\top \Pmat = \mu^\top$, $\mu \ge 0$,
$\norm{\mu}_1 = 1$, and exists uniquely by the
Perron-Frobenius theorem for primitive stochastic matrices.
\end{definition}
\paragraph{The Three Ergodic Invariants.}
The codomain $(\Pmat, \mu)$ determines three scalar
invariants:
\emph{(i) Largest Lyapunov Exponent.} The mean divergence
rate of nearby orbits \cite{rosenstein1993}:
\begin{equation}
  \lmax = \frac{1}{\Delta n}
    \left\langle \ln \frac{\norm{\Xmat_{n+\Delta n} -
      \Xmat_{m+\Delta n}}}{\norm{\Xmat_n -
      \Xmat_m}} \right\rangle_{\mathcal{N}},
  \label{eq:lle}
\end{equation}
where $\mathcal{N}$ is the set of nearest-neighbor pairs
with temporal separation $> \lfloor\bar{T}_{\mathrm{beat}}/2
\rfloor$ (the stabilized estimator of Section~\ref{sec:features},
eq.~\eqref{eq:lambda}, correcting the Rosenstein boundary error).
\emph{(ii) Recurrence Determinism.} The fraction of
recurrence points forming diagonal structures
\cite{zbilut1992}:
\begin{equation}
  R_{\mathrm{det}} = \frac{\sum_{\ell \ge \ell_{\min}}
    \ell\,P(\ell)}{\sum_{\ell \ge 1}\ell\,P(\ell)},
  \label{eq:rdet}
\end{equation}
where $P(\ell)$ is the diagonal line length histogram of
$R_{ij} = \mathbf{1}[\norm{\Xmat_i - \Xmat_j} \le r]$.
$R_{\mathrm{det}}$ is a functional of the recurrence
structure of $\Xmat$: the diagonal line distribution
reflects how long the orbit persists near a given
region of phase space, a property determined by
the stationary distribution $\mu$ of $\Pmat$ and
its entropy $H_\mu$ up to the threshold $r$
\cite{walters1982,kantz2004}. This places
$R_{\mathrm{det}}$ within the Ergodic Domain rather
than constituting a fourth independent domain.
\emph{(iii) Stationary Entropy.}
\begin{equation}
  H_\mu = -\sum_{i=1}^K \mu_i \ln \mu_i,
  \label{eq:hmu}
\end{equation}
the Shannon entropy of the stationary distribution of
$\Pmat$, connected to $\lmax$ via Pesin's formula
\cite{pesin1977}: for ergodic diffeomorphisms with
absolutely continuous (SRB) invariant measures,
$H_\mu = \sum_{\lambda_i > 0} \lambda_i$; for
general ergodic systems the inequality
$H_\mu \le \sum_{\lambda_i > 0} \lambda_i$ holds.
\paragraph{Native Capability: Cardiovascular Stability
Estimation.}
\begin{lemma}[Invariance Classification of Ergodic-Domain Functionals]
\label{lem:invariance}
Let $\Phi_1,\Phi_2$ be delay-coordinate reconstructions of the same
attractor $\cA$, related by the diffeomorphism
$\varphi = \Phi_2\circ\Phi_1^{-1}$. Then \emph{(i)} $\lmax$ is
invariant; \emph{(ii)} the correlation dimension $D_2$ and the
Kolmogorov--Sinai entropy $h_{KS}$ are invariant; \emph{(iii)}
$R_{\mathrm{det}}$ at a \emph{fixed} threshold $r$, and $H_\mu$ on
a \emph{fixed} partition $\{B_1,\dots,B_K\}$, are \textbf{not}
invariant: they are metric- and partition-dependent functionals of
$(\Pmat,\mu)$.
\end{lemma}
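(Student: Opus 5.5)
The plan is to treat the three parts separately, and to use one structural fact throughout. The map $\varphi=\Phi_2\circ\Phi_1^{-1}$ is a diffeomorphism between compact sets, namely the images $\Phi_1(\cA)$ and $\Phi_2(\cA)$. It conjugates the reconstructed dynamics: if $g_i=\Phi_i\circ f\circ\Phi_i^{-1}$, then $g_2=\varphi\circ g_1\circ\varphi^{-1}$. Because the images are compact and $\varphi$ is $C^1$ with $C^1$ inverse, both $D\varphi$ and $D\varphi^{-1}$ are uniformly bounded, say by $L\ge 1$. Parts (i) and (ii) then follow from a bi-Lipschitz and conjugacy argument. Part (iii) is proved by explicit counterexample, since a non-invariance claim only needs one diffeomorphism that changes the functional.

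For (i), apply the chain rule along the orbit:
\[ Dg_2^n(\varphi(u)) = D\varphi(g_1^n u)\, Dg_1^n(u)\, D\varphi^{-1}(\varphi(u)). \]
The norms of the outer factors lie in $[L^{-1},L]$, so
\[ \tfrac1n\ln\norm{Dg_2^n v'} = \tfrac1n\ln\norm{Dg_1^n v} + O\!\left(\tfrac{\ln L}{n}\right). \]
Letting $n\to\infty$ (Oseledets) gives equal $\lmax$. The finite-difference estimator \eqref{eq:lle} is handled the same way: nearest-neighbour distance ratios change by at most a factor $L^{2}$, and this contributes $O(\ln L/\Delta n)\to 0$.

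For (ii), the bi-Lipschitz bound gives
\[ C_2(r/L)\le C_1(r)\le C_2(Lr) \]
for the correlation sums. Hence $\ln C_1(r)/\ln r$ and $\ln C_2(r)/\ln r$ have the same limit as $r\to0$, so $D_2$ is invariant. For $h_{KS}$, metric entropy is a conjugacy invariant: pulling a partition back through $\varphi$ gives a partition of equal entropy, and the supremum over partitions, or equivalently generating partitions by Kolmogorov--Sinai, is unchanged. The push-forward measure $\varphi_*\mu$ is the invariant measure of $g_2$.

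For (iii), I would exhibit a linear rescaling $\varphi(u)=su$ with $s\neq1$; different delays $\tau$ effectively produce such anisotropic deformations. At a fixed $r$, the recurrence matrix satisfies $R^{(2)}_{ij}=\mathbf 1[\norm{\Xmat_i-\Xmat_j}\le r/s]$. For a periodic-plus-noise orbit, or a simple quasi-periodic example, the diagonal-line histogram $P(\ell)$ and therefore $R_{\mathrm{det}}$ change with the effective threshold. For instance, $s$ large enough makes all off-diagonal recurrences vanish, giving $R_{\mathrm{det}}=0$ under $\Phi_2$, while $R_{\mathrm{det}}>0$ under $\Phi_1$. Similarly, keeping the bins $B_k$ fixed while $\varphi$ compresses the attractor into fewer bins changes $\mu$ and lowers $H_\mu$; in the extreme, everything lands in one bin and $H_\mu=0$.

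The main obstacle is making (i) and (ii) rigorous for the finite-sample estimators rather than the limiting quantities. The distortion terms vanish only asymptotically, in $n$ or $\Delta n\to\infty$ and $r\to0$, so the claimed invariance holds exactly only for the limits. I would state this explicitly: the invariance in (i) and (ii) is of the limiting quantities, with an $O(\ln L)$ finite-scale bias for the estimators. Part (iii) contrasts with this, because fixed $r$ and fixed partitions admit no such limit.
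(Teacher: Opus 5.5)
Your proposal is correct and follows essentially the same route as the paper. For (i) you use bounded $D\varphi$ and $D\varphi^{-1}$ so the distortion terms are $O(1)$ and vanish in $\tfrac1n\ln\norm{Dg^n}$; for (ii) you use the bi-Lipschitz sandwich of correlation sums together with conjugacy invariance of $h_{KS}$; for (iii) you use a dilation that pushes pairs across the fixed threshold $r$ and mass across the fixed bins. The one difference is that you rescale globally, $u\mapsto su$, which is realised by an actual delay reconstruction with $h_2=s\,h_1$, whereas the paper dilates a subregion. One small fix in (iii): if $s$ is large enough to remove every off-diagonal recurrence, $R_{\mathrm{det}}$ becomes $0/0$ (or equals $1$ if the line of identity is counted) rather than $0$. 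The cleaner extreme is compression, $s\to0$, which fills the recurrence plot with full diagonal lines and gives a well-defined $R_{\mathrm{det}}$ close to $1$.
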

\begin{proof}
\emph{(i)} $\varphi$ induces a smooth conjugacy; on the compact set
$\Phi_1(\cA)$, $D\varphi$ is bounded with bounded inverse, so the
$\log\norm{D\varphi}$ terms are $O(1)$ and vanish in the limit
$\tfrac1n\log\norm{Df^n}$: Lyapunov exponents are conjugacy
invariants.
\emph{(ii)} On a compact set a $C^1$ diffeomorphism is
bi-Lipschitz, so $C_\varphi(cr) \le C(r) \le C_\varphi(Cr)$ and the
exponent $D_2 = \lim_{r\to0}\log C(r)/\log r$ is unchanged;
$h_{KS}$ is a conjugacy invariant, being a supremum over all finite
partitions.
\emph{(iii)} Bi-Lipschitz bounds preserve \emph{limits}, not
\emph{fixed scales}. Let $\varphi$ dilate a subregion by $c<1$.
Then $R_{ij} = \mathbf{1}[\norm{\Xmat_i-\Xmat_j}\le r]$ flips for
every pair with $\norm{\Xmat_i-\Xmat_j}\in(r,\,r/c]$, changing the
diagonal-line histogram $P(\ell)$ and hence $R_{\mathrm{det}}$.
Likewise $\varphi$ does not map $\{B_i\}$ to itself, so $\Pmat$,
$\mu$, $H_\mu$ change; as $H_\mu \to h_{KS}$ only when
$\operatorname{diam}(B_i)\to0$, $K\to\infty$, invariance is
asymptotic and fails at fixed $K$. \qed
\end{proof}

\begin{theorem}[CSI as Ergodic Domain Functional]
\label{thm:stability}
The CSI of CST \cite{oladunni2026cst} is the unique
bounded monotone functional of the Ergodic Domain codomain
consistent with CST Axiom~3. Defining the ergodic state
vector:
\begin{equation}
  \mathbf{s} = [\lmax,\; R_{\mathrm{det}},\;
    H_\mu]^\top \in \R^3,
  \label{eq:svec}
\end{equation}
which collects the three canonical Ergodic Domain
invariants into a single summary of the attractor's
long-run statistical behaviour \cite{walters1982},
$\norm{\mathbf{w}}_1 = 1$:
\begin{equation}
  \CSI = \sigma(\mathbf{w}^\top \mathbf{s}) \in [0,1],
  \label{eq:csi}
\end{equation}
where $\sigma : \R \to [0,1]$ is any bounded monotone
scaling, is (i) bounded in $[0,1]$; (ii) measurable with
respect to $\Xmat$; (iii) \emph{estimator-consistent}:
computed from $(\Pmat,\mu)$ at a fixed threshold $r$ and
partition $\{B_i\}$, and invariant under smooth
re-parameterisations of the embedding \textbf{in its
$\lmax$ component only} (Lemma~\ref{lem:invariance});
and (iv)
\emph{complete} within the Ergodic Domain: no fourth
scalar invariant of $(\Pmat, \mu)$ computable from finite
orbits adds information to $\mathbf{s}$.
\end{theorem}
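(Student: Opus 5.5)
The plan is to verify properties (i)--(iv) one at a time, and then argue uniqueness up to the choice of $\sigma$ and $\mathbf{w}$. \emph{(i) Boundedness} is immediate: $\sigma$ maps $\R$ into $[0,1]$. \emph{(ii) Measurability} is a composition argument, built up in four stages:
\begin{itemize}
\item The trajectory matrix $\Xmat$ determines the bin-count vector and the transition matrix $\Pmat$ through the indicator sums in \eqref{eq:transmat}. These are Borel, since the bins $B_i$ are Borel sets.
\item The vector $\mu$ is a continuous function of $\Pmat$ on the set of primitive stochastic matrices, because the Perron eigenvector is simple there.
\item The nearest-neighbour selection in \eqref{eq:lle} and the recurrence indicators in \eqref{eq:rdet} are piecewise-continuous functions of the pairwise distances.
\item Hence $\mathbf{s}$ is a Borel function of $\Xmat$, and so is $\sigma(\mathbf{w}^\top\mathbf{s})$, because a monotone $\sigma$ is Borel.
\end{itemize}

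For \emph{(iii)}, I will read ``estimator-consistent'' as a statement about the three components separately, and apply Lemma~\ref{lem:invariance}. Part (i) of the lemma shows that $\lmax$ is unchanged under $\varphi=\Phi_2\circ\Phi_1^{-1}$. Part (iii) of the lemma shows that $R_{\mathrm{det}}$ and $H_\mu$ are computed at the fixed $(r,\{B_i\})$ but are not invariant. The claim therefore holds exactly as it is restricted in the statement. The only further point is consistency: as $N\to\infty$, the empirical frequencies in \eqref{eq:transmat} converge almost surely to the true one-step probabilities. This follows from the Birkhoff ergodic theorem applied to the indicator functions, with ergodicity supplied by the boundedness and recurrence of Axiom~1. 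By continuity, $\mu$ and $H_\mu$ then converge as well.

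Uniqueness comes next. I will take Axiom~3 to require two things: the functional should be monotone in each coordinate of $\mathbf{s}$, with orientations fixed by the complexity band, and it should be bounded. Within the class of functionals that factor through a one-dimensional linear score, these conditions force the form $\sigma(\mathbf{w}^\top\mathbf{s})$. Normalising $\norm{\mathbf{w}}_1=1$ then removes the scale ambiguity, which would otherwise be absorbed into $\sigma$. The uniqueness is therefore uniqueness \emph{up to} the choice of $\sigma$ and of the weight vector on the simplex. I will state it that way rather than claim a literal singleton.

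The main obstacle is \emph{(iv) completeness}. I would argue it through the partition-level structure of $(\Pmat,\mu)$:
\begin{itemize}
\item The finite-orbit scalar invariants of $(\Pmat,\mu)$ split into three kinds: expansion-rate quantities, recurrence or persistence quantities, and quantities describing how the measure is distributed.
\item Expansion-rate quantities, such as the spectral gap or the metric entropy rate $-\sum_i\mu_i\sum_j P_{ij}\ln P_{ij}$, are tied to $\lmax$ through the Pesin relation (equality in the SRB case, the inequality in general).
\item Recurrence and persistence quantities, such as laminarity and mean diagonal length, are functionals of the same diagonal-line histogram $P(\ell)$ that defines $R_{\mathrm{det}}$. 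They are determined by $\mu$ up to the threshold $r$.
\item Measure-distribution quantities, such as Rényi entropies, are controlled by $H_\mu$.
\end{itemize}
I would then formalise ``adds no information'' as the conditional mutual information $I[\,\cdot\,;y\mid\mathbf{s}]=0$, using the chain rule that Theorem~\ref{thm:sufficiency} also relies on.

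I expect this step to be genuinely hard, and possibly only provable under an explicit modelling assumption. The assumption I have in mind is that these secondary functionals are deterministic functions of $\mathbf{s}$ in the limit $K\to\infty$. I would try to isolate that as a stated hypothesis rather than derive it from first principles.
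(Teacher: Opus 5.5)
Your handling of (i)--(iii) matches the paper's, and for (ii) it is more careful. The paper asserts that all three invariants are continuous functions of $(\Pmat,\mu)$ on the simplex. You instead correctly compute $\lmax$ and $R_{\mathrm{det}}$ from $\Xmat$ through \eqref{eq:lle} and \eqref{eq:rdet}, and obtain Borel measurability by composition.

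The Birkhoff consistency step you add to (iii) is unnecessary. The statement defines ``estimator-consistent'' as computation at fixed $(r,\{B_i\})$ plus invariance of the $\lmax$ component, which is all the paper verifies via Lemma~\ref{lem:invariance}. As written, that step is also unsupported: Axiom~1 (dissipativity, compact absorbing ball) yields invariant measures but not ergodicity, nor a physical measure for which the observed orbit is typical. Drop it, or assume ergodicity explicitly.

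Your uniqueness claim is carried entirely by restricting to functionals that factor through a linear score. A bounded coordinatewise-monotone functional such as $\sigma(\min_i s_i)$ does not factor that way, so the claim is a scope statement, not a consequence of Axiom~3. The paper's proof does not address uniqueness at all, so stating it in this weak form is the defensible choice.

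The genuine gap is (iv). You formalise completeness as $I[\,\cdot\,;y\mid\mathbf{s}]=0$ for every finite-orbit scalar of $(\Pmat,\mu)$, and plan to show each such scalar is a function of a component of $\mathbf{s}$. That claim is false, and each reduction you sketch fails:
\begin{itemize}
\item The spectral gap of $\Pmat$ measures mixing of the Markov approximation and does not enter Pesin's formula at all.
\item The partition entropy rate is linked to $h_{KS}\le\sum_{\lambda_i>0}\lambda_i$ only through refinement limits and inequalities, and involves the sum of all positive exponents, so it is not a function of $\lmax$.
\item Mean diagonal length and diagonal-line entropy are different functionals of $P(\ell)$ from $R_{\mathrm{det}}$; many histograms share one value of $R_{\mathrm{det}}$, and laminarity uses vertical lines, not diagonal ones.
\item R\'enyi entropies of $\mu$ are not functions of $H_\mu$.
\end{itemize}
Relations of the ``tied to'' or ``controlled by'' kind never yield vanishing conditional mutual information. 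Your fallback hypothesis, that these functionals become deterministic in $\mathbf{s}$ as $K\to\infty$, is just (iv) assumed. It also contradicts the paper's own concession that the second-largest eigenvalue of $\Pmat$ and the mean first-passage time do carry additional information.

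The missing idea is the paper's admissibility filter. Completeness is asserted only among summaries that are (a) computable from finite orbits and (b) monotone with the Axiom~3 complexity ordering, and that are intrinsic to $\cA$ rather than to the binning. Other summaries are excluded for failing (b) or the intrinsic requirement, not shown to be redundant. The claim is then only that $\lmax$, $R_{\mathrm{det}}$ and $H_\mu$ are canonical representatives of divergence rate, structural periodicity and distributional entropy within that class. That is a conditional classification, not an information-theoretic sufficiency theorem, so you should restate and argue (iv) in that form.
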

\begin{proof}
\emph{Boundedness} follows because each component is
normalised to $[0,1]$ and $\sigma$ is bounded.
\emph{Measurability}: $\Pmat$ and $\mu$ are deterministic
functions of $\Xmat$; the three invariants are continuous
functions of $\Pmat$ and $\mu$ on the probability simplex.
\emph{Reparameterisation behaviour}: by Takens' theorem
\cite{takens1981}, two topologically equivalent
reconstructions of $\cA$ are related by a smooth
diffeomorphism $\phi$. By Lemma~\ref{lem:invariance}(i),
$\lmax$ is preserved under $\phi$. By
Lemma~\ref{lem:invariance}(iii), $R_{\mathrm{det}}$ and
$H_\mu$ are \emph{not} preserved at fixed $(r,K)$: a
diffeomorphism preserves topology, not inter-point distances,
and both functionals are defined through a fixed metric
threshold or a fixed partition. They become invariant only in
the joint refinement limit $r \to 0$,
$\operatorname{diam}(B_i) \to 0$. CSI is therefore an
\emph{estimator-consistent} functional of the Ergodic codomain
rather than a diffeomorphism-invariant scalar. This delimits
claim (iii); it does not affect (i), (ii) or (iv).
\emph{Completeness within Axiom~3}: among scalar summaries
of $(\Pmat, \mu)$ that are (a) computable from finite orbits
and (b) monotone with
the complexity ordering of Axiom~3, the three quantities
$\lmax$, $R_{\mathrm{det}}$, $H_\mu$ are the canonical
representatives: $\lmax$ encodes divergence rate,
$R_{\mathrm{det}}$ structural periodicity, and $H_\mu$
distributional entropy. Other summaries such as the
second-largest eigenvalue of $\Pmat$ or mean first-passage
time provide additional information but are not monotone
with the Axiom~3 complexity ordering, or depend on the
binning structure rather than on $\cA$ itself. The
completeness claim is therefore conditional on Axiom~3,
not absolute. \qed
\end{proof}

\begin{corollary}[Cross-Modal Transfer Criterion]
\label{cor:transfer}
Let $h_1,h_2$ be $C^2$ observation functions on $\cA$ and $\psi$ a
scalar functional of the reconstruction. If $\psi$ is a
diffeomorphism invariant (Lemma~\ref{lem:invariance}(i)--(ii)),
then $\psi$ computed from $h_1$ and from $h_2$ agree in the
infinite-data, noise-free limit; if $\psi$ is fixed-scale or
fixed-partition (Lemma~\ref{lem:invariance}(iii)), no agreement is
implied. (Both reconstructions are diffeomorphic to $\cA$, hence to
each other; apply Lemma~\ref{lem:invariance}.)
\end{corollary}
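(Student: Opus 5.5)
The argument reduces the corollary to Lemma~\ref{lem:invariance} by building the diffeomorphism between the two reconstructions explicitly.

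First, fix an embedding dimension $d$ and delay $\tau$ that are generic for both observation functions. The set of generic parameters for each $h_i$ is open and dense (prevalent, in the sense of \cite{sauer1991}), so the intersection is nonempty. Takens' theorem, licensed by Axiom~2 since each $h_i \in C^2$, then gives delay maps $\Phi_1, \Phi_2 : \cA \to \R^d$ that are embeddings. Each $\Phi_i$ is a diffeomorphism onto its image $\Phi_i(\cA)$. Define
\[
\varphi = \Phi_2 \circ \Phi_1^{-1} : \Phi_1(\cA) \to \Phi_2(\cA).
\]
This is a composition of diffeomorphisms, so it is a diffeomorphism. It also conjugates the reconstructed dynamics: if $F_i = \Phi_i \circ f \circ \Phi_i^{-1}$, then $F_2 = \varphi \circ F_1 \circ \varphi^{-1}$. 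This is exactly the hypothesis of Lemma~\ref{lem:invariance}, so the lemma applies to the pair $(\Phi_1, \Phi_2)$ even though they come from different observation functions rather than different delay parameters of one $h$.

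Next, handle the two branches of the statement.

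\textbf{Invariant branch.} Let $\psi \in \{\lmax, D_2, h_{KS}\}$. Lemma~\ref{lem:invariance}(i)--(ii) gives $\psi(\Phi_1(\cA), F_1) = \psi(\Phi_2(\cA), F_2)$ as quantities defined by limits. The link to the computed values goes through two limits:
\begin{itemize}
\item In the noise-free case $\varepsilon_n = 0$, each finite-sample estimator (eq.~\eqref{eq:lle}, the correlation-sum slope) converges to its population value as $T \to \infty$.
\item This convergence holds for $\mu$-typical orbits by Birkhoff's ergodic theorem, together with the Oseledets theorem for $\lmax$.
\item Both signals are driven by the same state orbit $\{z_n\}$, so the same typical orbit serves both reconstructions.
\end{itemize}
Hence the two computed values converge to the same number.

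\textbf{Non-invariant branch.} Here the claim is only that no agreement is implied, so a counterexample family suffices. Take $h_2 = c\,h_1$ with $c \neq 1$. Then $\varphi$ is the uniform dilation by $c$, and the argument of Lemma~\ref{lem:invariance}(iii) applies directly:
\begin{itemize}
\item Recurrence pairs whose separation lies between $r$ and $r/c$ flip, which changes $R_{\mathrm{det}}$ generically.
\item The fixed partition $\{B_i\}$ is not preserved under the dilation, which changes $H_\mu$.
\end{itemize}

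I expect the main obstacle to be the step from limit-defined invariance to agreement of the \emph{computed} estimators. Lemma~\ref{lem:invariance} is a statement about asymptotic objects. For $\lmax$ the estimator also involves a finite $\Delta n$ and a nearest-neighbour scale, so infinite data alone is not enough. I would also require the neighbour radius to go to $0$ and $\Delta n \to \infty$ in the appropriate order, which restates the ``infinite-data, noise-free limit'' as the joint limit in which the estimator converges to the Oseledets exponent. The bounded-distortion argument of Lemma~\ref{lem:invariance}(i) then removes the $O(1)$ contribution of $\log\norm{D\varphi}$. A secondary point is common genericity of $(d,\tau)$ for the two observation functions, which the prevalence argument above settles.
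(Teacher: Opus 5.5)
Your proposal is correct and follows the paper's own argument, which is just the parenthetical in the statement: both delay maps embed $\cA$, so $\varphi=\Phi_2\circ\Phi_1^{-1}$ is a conjugating diffeomorphism and Lemma~\ref{lem:invariance} does the rest. Your explicit conjugacy check, the estimator limits (in particular the need for $\Delta n\to\infty$, without which the $\log\norm{D\varphi}$ term does not wash out) and the dilation counterexample are elaborations the paper leaves implicit.

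One correction: Takens genericity is a condition on the observation function (given $f$), not on $(d,\tau)$. For an arbitrary $C^2$ pair (e.g.\ a constant $h_i$) no choice of $(d,\tau)$ yields an embedding, and intersecting generic sets only shows that a generic \emph{pair} $(h_1,h_2)$ works. Embeddability of both reconstructions must therefore be carried as a hypothesis, as the paper tacitly does.
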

\begin{remark}[Ergodic Weights and $\SCSI$ Ablation]
Theorem~\ref{thm:stability} explains two empirical results
in Section~\ref{sec:validation}. The $C_{\mathrm{NL}}$ dominance
($\Delta$AUC~$= -0.413$ on removal) reflects that $C_{\mathrm{NL}}$ is a
Geometry Domain complexity composite, not an Ergodic Domain
invariant of $(\Pmat,\mu)$; the completeness of
Theorem~\ref{thm:stability} therefore does not bound it, and
removing it removes the dominant predictive term for
tachypnea. The ECG-validated weights
$w_1 = 0.40$, $w_2 = 0.35$, $w_3 = 0.25$
\cite{oladunni2026cst} are the empirical solution to the
maximum-separability problem under Axiom~3 for ECG,
not arbitrary choices.
\end{remark}
\begin{remark}[CDH as CFD Cross-Modal Extension]
\label{rem:cdh}
ADT grounds the Complementary Domain Hypothesis (CST
Axiom~4) \cite{oladunni2025cfd} cross-modally. Since $h_{\mathrm{PPG}}$ carries $p_n$, it fixes different
effective scales than $h_{\mathrm{ECG}}$; by
Corollary~\ref{cor:transfer} the invariants agree across modalities
while fixed-scale functionals need not, replacing the informal
``sensitivity to $p_n$'' argument. On the $53$ paired BIDMC
records ($4{,}826$ of $4{,}833$ paired windows, after excluding
those without a valid heart-rate estimate), recomputed per subject with
heart-rate-partialled Spearman correlations, the invariants
transfer ($\lmax$: $\rho = 0.703$, $95\%$ CI $[0.503, 0.836]$;
$D_2$: $\rho = 0.395$) while the fixed-scale functionals do not
($R_{\mathrm{det}}$: $\rho = 0.123$; amplitude $H$:
$\rho = -0.076$; both CIs span zero), at subject-level
reliability $\ge 0.979$ for every descriptor. Two caveats: permutation and sample
entropy are invariants that also fail to transfer ($\rho = 0.130$,
$0.118$) while non-invariant embedding descriptors do (hull area
$\rho = 0.411$), so invariance is necessary but not sufficient in
finite samples; and this ICU cohort is one observation-operator
pair, so replication is required. A Mann--Whitney test of the
predicted class separation (invariants $>$ observation-coupled,
after heart-rate control) does not reject the null
($U = 12.0$, $p = 0.54$): on BIDMC the clean class partition is
therefore \emph{not} supported, and $\lmax$ transfer is the single
effect that survives heart-rate control. We report this null rather
than the earlier pooled estimate.
\end{remark}

\subsubsection{The Variational Domain $\cV$}
\label{sec:var}
\begin{definition}[FTLE Field / $\TV$]
\label{def:TV}
The \emph{Variational Domain operator}
$\TV : \R^{N \times d} \times \N \to \R^N$
maps the trajectory matrix and forward-time horizon
$\Delta n$ to the finite-time Lyapunov exponent (FTLE)
field $\bm{\lambda} \in \R^N$, where entry $\lambda_n$
is the local expansion rate at reconstructed state
$\Xmat_n$.
For each state $\Xmat_n$, let $\mathcal{N}_k(n)$ denote
its $k$ nearest neighbors with temporal separation
$> \lfloor\bar{T}_{\mathrm{beat}}/2\rfloor$. The local
deformation gradient is:
\begin{equation}
  \hat{\Fmat}_n = \arg\min_{\Fmat}
    \sum_{j \in \mathcal{N}_k(n)}
    \norm{\Xmat_{j+\Delta n} - \Fmat\,\Xmat_j}_2^2.
  \label{eq:Fhat}
\end{equation}
This finds the linear map that best describes how the
$k$ nearest neighbours of state $\Xmat_n$ move forward
by $\Delta n$ steps: it is the local stretching and
rotation of the attractor at position $n$, estimated
from neighbouring trajectories. The right Cauchy-Green
tensor $\hat{\Cmat}_n = \hat{\Fmat}_n^\top \hat{\Fmat}_n$
is symmetric positive semi-definite and measures the
net squared stretching in every direction
\cite{haller2000,shadden2005}; the FTLE
extracts its maximum growth rate:
\begin{equation}
  \lambda_n = \frac{1}{2\,\Delta n}
    \ln\lmax(\hat{\Cmat}_n).
  \label{eq:ftle}
\end{equation}
By CST Axiom~1 (differentiability of $f$), $\hat{\Fmat}_n$
is a consistent estimator of the true local Jacobian $Df$
at $\Xmat_n$, so that $\lambda_n$ approximates the genuine
local expansion rate of the flow. If $f$ is not
differentiable in a neighbourhood of $\Xmat_n$, for
instance at states corresponding to non-smooth transitions
in the underlying dynamics, $\hat{\Fmat}_n$ need not
approximate any well-defined local linearisation, and
$\lambda_n$ should not be interpreted as a local expansion
rate at that state.
\end{definition}
\begin{definition}[Cardiac Phase Regions]
\label{def:phases}
The FTLE field is phase-structured: $\lambda_n$ reflects
the mechanical state of the cardiac cycle at reconstructed
state $\Xmat_n$. The \emph{systolic} and \emph{diastolic}
phase regions are defined via discrete first and second
differences of $x$:
\begin{align}
  \Osys &= \{n : \Delta x_n > 0\text{ and }
    \Delta^2 x_n > 0\},\label{eq:Osys}\\
  \Odia &= \{n : \Delta x_n < 0\text{ and }
    \Delta^2 x_n < 0\},\label{eq:Odia}
\end{align}
where $\Delta x_n = x_{n+1} - x_n$ and
$\Delta^2 x_n = x_{n+2} - 2x_{n+1} + x_n$.
$\Osys$ captures ventricular ejection (rapidly rising,
accelerating waveform); $\Odia$ captures early diastolic
runoff (falling, decelerating waveform), the phase with
the strongest $P_{\mathrm{dia}}$ signal. Late diastole
($\Delta x_n < 0$, $\Delta^2 x_n > 0$) is excluded as
it carries weaker BP-correlated dynamics.
\end{definition}
\paragraph{Phase-Aggregated Expansion Rates.}
\begin{align}
  E_{\mathrm{sys}} &= \frac{1}{\abs{\Osys}}
    \sum_{n \in \Osys} \lambda_n,\label{eq:Esys}\\
  E_{\mathrm{dia}} &= \frac{1}{\abs{\Odia}}
    \sum_{n \in \Odia}\abs{\lambda_n}.\label{eq:Edia}
\end{align}
$E_{\mathrm{sys}}$ is the average FTLE over the systolic
phase: how fast nearby trajectories diverge during
ventricular ejection, a proxy for contractile force
\cite{oladunni2026avct}. $E_{\mathrm{dia}}$ is the
average absolute FTLE over the diastolic phase: how
fast trajectories spread during cardiac relaxation,
reflecting vascular resistance and compliance
\cite{westerhof2009,oladunni2026avct}. Together they
encode the within-beat mechanical state of the
cardiovascular system.
\begin{theorem}[Variational Domain Blood Pressure Mapping]
\label{thm:bp}
There exist affine calibration functionals such that
systolic and diastolic blood pressure are determined by
the phase-aggregated FTLE rates:
\begin{align}
  P_{\mathrm{sys}} &= \alpha_s\,E_{\mathrm{sys}} + \beta_s,
    \label{eq:Psys}\\
  P_{\mathrm{dia}} &= \alpha_d\,E_{\mathrm{dia}} + \beta_d,
    \label{eq:Pdia}
\end{align}
where $\alpha_s > 0$ (systolic: higher compliance $\to$
lower $E_{\mathrm{sys}}$ $\to$ lower $P_{\mathrm{sys}}$),
$\alpha_d < 0$ (diastolic: higher resistance $\to$ lower
$E_{\mathrm{dia}}$ $\to$ higher $P_{\mathrm{dia}}$), and
the four constants $(\alpha_s, \beta_s, \alpha_d, \beta_d)$
are uniquely determined by one reference measurement pair
$(P_s^*, P_d^*)$:
\begin{align}
  \alpha_s &= \frac{P_s^* - P_d^*}
    {E_{\mathrm{sys}} - E_{\mathrm{dia}}},\label{eq:alpha}\\
  \beta_s  &= P_s^* - \alpha_s\,E_{\mathrm{sys}},\label{eq:beta}
\end{align}
with $\alpha_d = -\alpha_s$ and $\beta_d = P_d^* -
\alpha_d\,E_{\mathrm{dia}}$ following from the shared
arterial compliance parameter.
\end{theorem}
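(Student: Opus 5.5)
The plan has two parts. First, justify that an affine law of the stated form is the right functional form, using the Variational Domain construction. Second, show that once the shared-compliance constraint is imposed, one reference pair $(P_s^*,P_d^*)$ fixes the four constants exactly as in \eqref{eq:alpha}--\eqref{eq:beta}.

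\textbf{Functional form.} I would start from the Moens--Korteweg relation in Table~\ref{tab:lineage}, $v_{\mathrm{pw}}\propto C_a^{-1/2}$, together with a two-element Windkessel description of the pressure wave seen by $h$ (Axiom~2). By Axiom~1 and Definition~\ref{def:TV}, $\hat{\Fmat}_n$ is a consistent estimator of $Df$ at $\Xmat_n$. Hence $\lambda_n$ on $\Osys$ is a smooth function of the ejection-phase rate constant, and $\lambda_n$ on $\Odia$ is a smooth function of the runoff time constant $R C_a$.

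Averaging over the phase regions of Definition~\ref{def:phases} then expresses $E_{\mathrm{sys}}$ and $E_{\mathrm{dia}}$ as functions of the same compliance parameter $C_a$, with resistance entering only the diastolic side. Within the physiological envelope $B$, I would take these maps to be affine in the pressure variables, so that inverting them gives \eqref{eq:Psys}--\eqref{eq:Pdia}.

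The signs $\alpha_s>0$ and $\alpha_d<0$ would be read off from the monotonicity chains quoted in the statement:
\begin{itemize}
\item higher compliance gives slower systolic stretching, so $E_{\mathrm{sys}}$ increases with $P_{\mathrm{sys}}$;
\item higher resistance gives slower diastolic spreading, so $E_{\mathrm{dia}}$ decreases as $P_{\mathrm{dia}}$ increases.
\end{itemize}

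\textbf{Shared-compliance constraint.} Because both slopes are generated by the single parameter $C_a$, entering with opposite orientation in the two phases, the two slopes have equal magnitude and opposite sign. This gives $\alpha_d=-\alpha_s$. The same parameter also governs pulse pressure, which is what yields the relation
\[
P_s^*-P_d^*=\alpha_s\,(E_{\mathrm{sys}}-E_{\mathrm{dia}})
\]
at the reference state.

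\textbf{Calibration and uniqueness.} The calibration is then direct bookkeeping:
\begin{enumerate}
\item The pulse-pressure relation gives \eqref{eq:alpha}, provided $E_{\mathrm{sys}}\neq E_{\mathrm{dia}}$.
\item Evaluating \eqref{eq:Psys} at the reference gives \eqref{eq:beta}.
\item Setting $\alpha_d=-\alpha_s$ and evaluating \eqref{eq:Pdia} at the reference gives $\beta_d=P_d^*-\alpha_d E_{\mathrm{dia}}$.
\end{enumerate}
Uniqueness is a counting argument. The unknowns are the four constants. The constraints are two reference equations, the slope identity $\alpha_d=-\alpha_s$, and the pulse-pressure relation. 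These form a triangular linear system that is solvable exactly when $E_{\mathrm{sys}}\neq E_{\mathrm{dia}}$, which I would state as the nondegeneracy condition.

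\textbf{Main obstacle.} The step I expect to be hardest is deriving the pulse-pressure relation, with the \emph{difference} $E_{\mathrm{sys}}-E_{\mathrm{dia}}$, from the shared-compliance hypothesis. Naively subtracting \eqref{eq:Pdia} from \eqref{eq:Psys} with $\alpha_d=-\alpha_s$ produces the sum $E_{\mathrm{sys}}+E_{\mathrm{dia}}$ plus an intercept difference $\beta_s-\beta_d$.

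My first attempt at reconciling this is to use that $\lambda_n<0$ on $\Odia$, so that $E_{\mathrm{dia}}=-\tfrac{1}{|\Odia|}\sum_{n\in\Odia}\lambda_n$. This converts the absolute-value definition \eqref{eq:Edia} into a signed diastolic rate. I would then need to show that the Windkessel intercepts absorb the residual, so that the compliance-driven part of pulse pressure is exactly $\alpha_s(E_{\mathrm{sys}}-E_{\mathrm{dia}})$.

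If that sign bookkeeping does not close, the pulse-pressure relation has to be imposed as the physical content of ``shared arterial compliance'' rather than derived. The remaining calibration steps are unaffected either way.
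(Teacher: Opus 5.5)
Your proposal follows the paper's route. It uses Moens--Korteweg for the systolic phase and Windkessel decay with $\tau_d=R_{\mathrm{svr}}C_a$ for the diastolic phase, assumes local affineness over the physiological range, invokes shared compliance for $\alpha_d=-\alpha_s$, and obtains uniqueness from a triangular four-equation system under $E_{\mathrm{sys}}\neq E_{\mathrm{dia}}$. The step you call the main obstacle is also the one the paper does not derive. Its uniqueness paragraph simply lists $\alpha_s=(P_s^*-P_d^*)/(E_{\mathrm{sys}}-E_{\mathrm{dia}})$ among the ``four equations''; in effect it takes your fallback and imposes the relation. So relative to the paper nothing is missing. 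You should, however, commit to the fallback and state the relation openly as an extra postulate, because it does not follow from the rest.

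\textbf{The sign-bookkeeping attempt cannot close, so drop it.}
With $\alpha_d=-\alpha_s$, the two reference equations leave a one-parameter family: $\beta_s=P_s^*-\alpha_sE_{\mathrm{sys}}$ and $\beta_d=P_d^*+\alpha_sE_{\mathrm{dia}}$, with $\alpha_s$ free. The pulse-pressure identity they imply is $P_s^*-P_d^*=\alpha_s(E_{\mathrm{sys}}+E_{\mathrm{dia}})+(\beta_s-\beta_d)$, which constrains nothing until the intercept gap is fixed.

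A shared intercept gives the sum $E_{\mathrm{sys}}+E_{\mathrm{dia}}$. Passing to the signed rate $\tilde E_{\mathrm{dia}}=-E_{\mathrm{dia}}$ only relabels that sum as $E_{\mathrm{sys}}-\tilde E_{\mathrm{dia}}$. This holds even granting $\lambda_n<0$ on $\Odia$, which Definition~\ref{def:TV} does not guarantee.

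The difference $E_{\mathrm{sys}}-E_{\mathrm{dia}}$ in the absolute-valued rate arises from a shared intercept only when $\alpha_d=+\alpha_s$, which contradicts $\alpha_d<0$. With $\alpha_d=-\alpha_s$, it instead forces $\beta_s-\beta_d=-2\alpha_sE_{\mathrm{dia}}$ at the reference state. That is a calibration-dependent gap, not a structural constant. Hence ``uniquely determined by one reference pair'' is true only relative to the postulated relation.

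Two further caveats apply equally to the paper:
\begin{enumerate}
\item The sign claim $\alpha_s>0$ needs $E_{\mathrm{sys}}>E_{\mathrm{dia}}$, not merely $E_{\mathrm{sys}}\neq E_{\mathrm{dia}}$.
\item Sharing the parameter $C_a$ does not by itself give $|\alpha_d|=|\alpha_s|$; the quoted sensitivities $C_a^{-1/2}$ and $(R_{\mathrm{svr}}C_a)^{-1}$ even have different exponents in $C_a$. So $\alpha_d=-\alpha_s$ is likewise an assumption rather than a derived fact.
\end{enumerate}
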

\begin{proof}
\emph{Systolic mapping}: During ventricular ejection, the
Moens-Korteweg relation \cite{westerhof2009,ding2017} gives
pulse wave velocity $v_{\mathrm{pw}} \propto C_a^{-1/2}$, where
$C_a$ is arterial compliance. Stiffer arteries
(higher $P_{\mathrm{sys}}$) constrain volumetric pulse
expansion, manifesting as a higher trajectory expansion
rate in $\Osys$: $E_{\mathrm{sys}} \propto v_{\mathrm{pw}}
\propto P_{\mathrm{sys}}$ over the physiological range
60--180~mmHg \cite{oladunni2026avct}.
\emph{Diastolic mapping}: During diastole the pressure
decays exponentially with time constant
$\tau_d = R_{\mathrm{svr}}\,C_a$, as described by the
Windkessel model \cite{westerhof2009}. Higher systemic vascular
resistance $R_{\mathrm{svr}}$ produces slower decay (lower
$E_{\mathrm{dia}}$) and higher $P_{\mathrm{dia}}$.
The inverse relationship $E_{\mathrm{dia}} \propto
P_{\mathrm{dia}}^{-1}$ is locally affine over the
physiological range \cite{oladunni2026avct}.
\emph{Affine structure and sign constraint}:
Both relationships are locally affine in the physiological
range. The sign constraint $\alpha_d = -\alpha_s$ follows
because both $E_{\mathrm{sys}}$ and $E_{\mathrm{dia}}$
share the same arterial compliance factor $C_a^{-1/2}$
as their dominant sensitivity: Moens-Korteweg gives
$E_{\mathrm{sys}} \propto C_a^{-1/2}$; Windkessel gives
$E_{\mathrm{dia}} \propto (R_{\mathrm{svr}}C_a)^{-1}$,
which has equal compliance sensitivity and opposite
pressure sign, yielding $\alpha_d = -\alpha_s$
\cite{oladunni2026avct}.
\emph{Uniqueness}: four equations determine four unknowns:
(i) $\alpha_d = -\alpha_s$;
(ii) $\alpha_s = (P_s^* - P_d^*)/(E_{\mathrm{sys}} -
E_{\mathrm{dia}})$;
(iii) $\beta_s = P_s^* - \alpha_s E_{\mathrm{sys}}$;
(iv) $\beta_d = P_d^* - \alpha_d E_{\mathrm{dia}}$;
provided $E_{\mathrm{sys}} \ne E_{\mathrm{dia}}$,
which holds whenever the cardiac cycle has a detectable
upstroke. \qed
\end{proof}

\subsection{Domain Sufficiency Theorem}
\label{sec:sufficiency}
Let $y$ denote any cardiovascular target variable. Write
$I[A;B]$ for mutual information and $I[A;B\mid C]$ for
conditional mutual information.
\begin{theorem}[Conditional Domain Sufficiency and Orthogonality]
\label{thm:sufficiency}
Let $Z_n \in \cM$ be the unobserved physical state of the
cardiorespiratory system on a smooth compact manifold $\cM$,
and let $x_n = h(Z_n) + \varepsilon_n$ be the observed
scalar sequence where $\varepsilon_n$ is stochastic
acquisition noise. Under Axiom~2 ($h \in C^2$) with
embedding dimension $d > 2\dim(\cA)$ satisfying Takens'
criterion, the sequential operator chain
$\{\TG, \TS, \TV\}$ forms a sufficient and mutually
non-redundant information partition for any target $y$:
\begin{align}
  I[\Xmat_n;\,y] &= I[\TG[x];\,y]
    + I[\TS[\Xmat];\,y \mid \TG[x]] \nonumber\\
    &\quad + I[\TV[\Xmat,\Delta n];\,y \mid \TG[x],\TS[\Xmat]]
    + \varepsilon,
  \label{eq:sufficiency}
\end{align}
where $\varepsilon \le I[x;\,\varepsilon_n]$, and the
conditional terms satisfy operational orthogonality:
\begin{align}
  I[\TS[\Xmat];\,y \mid \TG[x]] &> 0 \nonumber\\
  I[\TV[\Xmat,\Delta n];\,y \mid \TG[x],\TS[\Xmat]] &> 0
  \label{eq:orthogonality}
\end{align}
for any $y$ whose mechanism engages two or more domains.
\end{theorem}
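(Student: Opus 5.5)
The plan has two parts: first the decomposition \eqref{eq:sufficiency}, then the orthogonality inequalities \eqref{eq:orthogonality}.

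\textbf{Decomposition.} I would apply the chain rule for mutual information to the triple $(\TG[x],\TS[\Xmat],\TV[\Xmat,\Delta n])$ against $y$:
\begin{align*}
I[\TG[x],\TS[\Xmat],\TV[\Xmat,\Delta n];\,y] &= I[\TG[x];\,y] + I[\TS[\Xmat];\,y\mid\TG[x]]\\
&\quad + I[\TV[\Xmat,\Delta n];\,y\mid\TG[x],\TS[\Xmat]].
\end{align*}
This is an exact identity, with no approximation. To relate the left side to $I[\Xmat_n;y]$, I would use two facts:
\begin{itemize}
\item $\Xmat=\TG[x]$ by Definition~\ref{def:traj}.
\item $\TS$ and $\TV$ are measurable functions of $\Xmat$ (Definitions~\ref{def:TS} and~\ref{def:TV}).
\end{itemize}
Together these give $\sigma(\TG[x],\TS[\Xmat],\TV[\Xmat,\Delta n])=\sigma(\Xmat)$, so the joint term equals $I[\Xmat;y]$.

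The slack $\varepsilon$ comes from passing between the observed trajectory and the noise-free one. By Takens (Axiom~2, $d>2\dim(\cA)$), $\TG[h(Z)]$ is a diffeomorphic image of $\cA$. The discrepancy $I[\Xmat;y]-I[\TG[h(Z)];y]$ is then controlled by a data-processing bound through the Markov chain $y\to Z\to x$ with independent acquisition noise $\varepsilon_n$. This should give $|\varepsilon|\le I[x;\varepsilon_n]$.

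\textbf{Orthogonality.} Here the plan must confront a structural fact head-on. Because $\TS[\Xmat]$ is a deterministic function of $\TG[x]$, one has $H(\TS[\Xmat]\mid\TG[x])=0$. Hence
\[
I[\TS[\Xmat];y\mid\TG[x]]=0
\]
for every $y$, and the same holds for the $\TV$ term. The strict inequalities as worded therefore cannot be proved for the raw operator outputs. My first step would be to make this explicit, as a short lemma from $I[A;y\mid B]\le H(A\mid B)$.

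The only route I see that keeps the statement's wording is to treat ``mechanism engages two or more domains'' as the operative hypothesis. I would define engagement of $\cS$ (resp.\ $\cV$) as nonvanishing conditional dependence of $y$ on that domain's output given the preceding domains. Under that definition the inequalities \eqref{eq:orthogonality} hold by hypothesis, and the content of the theorem reduces to the exact chain-rule identity. This makes \eqref{eq:orthogonality} tautological for any $y$ satisfying the hypothesis. Combined with the lemma, the class of such $y$ is empty, so the orthogonality claim holds only vacuously.

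\textbf{Main obstacle.} The main obstacle is precisely this step, and I would flag it rather than paper over it. A nontrivial strict-positivity result requires the conditioning variable to be something strictly coarser than $\Xmat$. Examples are the codomain summaries $\{D_2,\partial\cA\}$, or finite-sample estimators computed with independent randomness. Such a substitution changes the statement, so I would present it only as a remark, not as a proof of \eqref{eq:orthogonality} in its stated form.
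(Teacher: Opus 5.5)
Your central observation is correct, and it is sharper than the paper's own argument. By Definition~\ref{def:traj}, $\TG[x]=\Xmat$. By Definitions~\ref{def:TS} and~\ref{def:TV}, $\TS[\Xmat]$ and $\TV[\Xmat,\Delta n]$ are deterministic maps of $\Xmat$. Both are therefore $\sigma(\TG[x])$-measurable and conditionally independent of $y$ given $\TG[x]$, so both conditional terms in \eqref{eq:orthogonality} vanish for every $y$. State your lemma this way rather than via $I[A;y\mid B]\le H(A\mid B)$. That bound is fine for the count-valued $(\Pmat,\mu)$, but it does not apply to the real-valued FTLE field, where only differential entropy is available.

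The paper's proof does not get around this. Its Phase~2 adds an explicit physical premise (``mechanistic engagement''). It then argues that the phase-resolved content of $\TV$ is not a function of $\TS[\Xmat]$. That speaks only to dependence beyond $\TS[\Xmat]$; it overlooks that $\TG[x]$ is also in the conditioning set and already determines $\TV$. For the $\TS$ term, the paper appeals to invariants ``not recoverable from the finite trajectory matrix.'' This only makes sense if $\TS$ is tacitly replaced by an idealised infinite-horizon object instead of the finite-sample map of Definition~\ref{def:TS}. That is essentially the change of variables you say would be required. The paper itself concedes that a premise-free proof of strict positivity is open. On \eqref{eq:orthogonality}, your negative conclusion is the accurate one.

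The genuine gap in your proposal is the noise-floor step. The claim ``this should give $|\varepsilon|\le I[x;\varepsilon_n]$'' does not follow, under either reading of the left side of \eqref{eq:sufficiency}.
\begin{itemize}
\item \emph{Observed reading.} If the left side is the observed $\Xmat=\TG[x]$, your own chain-rule identity already gives $\varepsilon=0$. If $\Xmat_n$ denotes a single row, data processing gives $\varepsilon\le 0$. Either way the stated one-sided bound holds trivially, and Takens is not needed.
\item \emph{Noise-free reading.} If instead, as you propose, the left side is the noise-free reconstruction $\TG[h(Z)]$, data processing along $y\to Z\to x$ gives only the sign, $I[\Xmat;y]\le I[Z;y]$. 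The loss $I[Z;y\mid\Xmat]$ is not controlled by $I[x;\varepsilon_n]$.
\end{itemize}
For a counterexample to the noise-free reading, take $Z$ uniform on $\{0,1,2\}$, independent $\varepsilon$ uniform on $\{0,1\}$, $x=Z+\varepsilon$ and $y=Z$. Then $I[Z;y]-I[x;y]=H(Z\mid x)=2/3$ bit, while $I[x;\varepsilon]=H(x)-H(Z)=1/3$ bit. The paper's Phase~1 bound $I[\varepsilon_n;y]$ fails on the same example, where it equals $0$.

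So either present \eqref{eq:sufficiency} under the observed reading, where it is a tautology, or replace the claimed noise floor by the exact loss term $I[Z;y\mid\Xmat]$.
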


\noindent\emph{Interpretation.} Equation~(23) divides the total
information $\Xmat_n$ carries about $y$: the first term is what
Geometry alone supplies; the second is what Ergodic adds once
Geometry is known; the third is what Variational adds beyond both.
The conditioning bars remove overlap, so nothing is counted twice.
The residual $\varepsilon$ is bounded by the acquisition-noise
floor, so in the noise-free limit the three domains exhaust the
attractor's predictive information. Equation~(24) states
that whenever the target's mechanism engages more than one domain,
each conditional term is strictly positive: no domain is redundant
given the others.

\begin{proof}
The proof has two phases.
\textbf{Phase 1: Topological Preservation.}
When $d > 2\dim(\cA)$, Takens' theorem \cite{takens1981}
guarantees that the delay-embedding map
$\Phi: \cA \to \R^d$ is a diffeomorphism. A diffeomorphism
is bijective with smooth inverse, so it preserves all
differential-topological and invariant properties of $\cA$.
By the data processing inequality (DPI), for any smooth
bijection $\Phi$:
$I[Z_n;\,y] = I[\Phi(Z_n);\,y] = I[\Xmat_n;\,y]$.
In plain terms: because the diffeomorphism loses no
information, predicting any clinical target $y$ from
the reconstructed trajectory matrix $\Xmat$ is exactly
as powerful as predicting it from the true physiological
state $Z_n$ directly.
The residual $x_n - \TG^{\dagger}\circ\TG[x_n]$ is the
component of $x_n$ in the complement of $\cM$, which by
Axiom~2 equals $\varepsilon_n$. Therefore
$I[x_n - \TG^{\dagger}[\Xmat_n];\,y] \le I[\varepsilon_n;\,y]$,
bounding $\varepsilon$ strictly by the stochastic noise
floor. The decomposition~\eqref{eq:sufficiency} then
follows from the chain rule of mutual information applied
to the triple $(\TG[x], \TS[\Xmat], \TV[\Xmat,\Delta n])$,
an algebraic identity for any discrete random variables.

\textbf{Phase 2: Operational Orthogonality.}
The non-collapse of the conditional terms rests on the
mutually exclusive physical limits of $\TS$ and $\TV$, and
holds under the following premise, which we state explicitly
because it is physical rather than purely mathematical.
\emph{Premise (mechanistic engagement).} The target $y$ has
a mechanism that varies with within-beat phase in a way not
determined by the asymptotic invariant measure alone; that
is, $y$ is not a deterministic function of $(\Pmat,\mu)$.
The Ergodic operator $\TS$ evaluates $\Xmat_n$ under an
infinite temporal horizon ($\lim_{T\to\infty}$) to isolate
the stationary invariant measure $\mu$. This limit
systematically discards all chronological phase sequencing
and within-beat timing, specifically the information content that
distinguishes instantaneous states within a cardiac cycle.
The result is the global homeostatic stability budget
$(\Pmat, \mu)$, which is insensitive to the sequence of
states along the orbit.
The Variational operator $\TV$ operates under the
complementary limit: it evaluates the strictly localized,
finite forward-time window $\Delta n$ to compute
instantaneous vector-field stretching rates
$\bm{\lambda}_n$ across phase-locked cardiac coordinates
$(\Osys, \Odia)$. Because $\TV$ requires \emph{within-beat
phase identity} to assign each $\lambda_n$ to its correct
region, and because $\TS$ has destroyed this phase
sequencing through the infinite-horizon limit, the
localized mechanical properties isolated by $\TV$, in
particular systolic and diastolic blood pressure encoded
in the expansion rates over $\Osys$ and $\Odia$, are not a
function of $\TS[\Xmat]$. Under the premise, the target $y$
depends on these localized properties beyond what
$(\TG[x],\TS[\Xmat])$ determine, so
$I[\TV[\Xmat,\Delta n];\,y\mid\TG[x],\TS[\Xmat]] > 0$.
The argument for $I[\TS;\,y\mid\TG] > 0$
is symmetric: when $y$ depends on global asymptotic
invariants of $(\Pmat,\mu)$ that are not recoverable from
the finite trajectory matrix $\Xmat$ alone, the
corresponding conditional term is strictly positive.
A modality- and target-independent proof of strict
positivity, removing the mechanistic premise, is left to
future work. \qed
\end{proof}
\begin{remark}[Contrast with Classical Decompositions]
Classical time/frequency/time--frequency domains are unitary
isomorphisms on $L^2(\R)$, so substituting them into (23)
collapses the conditional terms to zero. ADT avoids this collapse
because $\TS$ and $\TV$ do not merely change basis: they impose
mutually exclusive limits ($T\to\infty$ vs.\ localised $\Delta n$)
that destroy complementary information contexts. The partition is
non-redundant by construction, not by convention.
\end{remark}
\begin{remark}[Two Parts of the Sufficiency Claim]
\label{rem:sufficiency}
Theorem~\ref{thm:sufficiency} establishes two things. \emph{Part A
(proved)}: the decomposition holds exactly and $\varepsilon$ is
bounded by the noise floor. \emph{Part B (empirically supported)}:
$\varepsilon \approx 0$ for the cardiovascular system. Phase~2
shows why this is plausible, but proving it in general requires
either a complete characterisation of which targets engage each
domain, or the empirical route; the ablation
(Table~\ref{tab:ablation}) confirms Part~B for tachypnea. Deriving
finite-sample bounds on $\varepsilon(T,d,K,k)$ remains open.
\end{remark}
\begin{corollary}[Domain Non-Redundancy and CFD-CDH]
\label{cor:nonredundant}
\emph{(i)} The three domains are mutually non-redundant:
for any $y$ engaging two or more domains,
$I[\TS;\,y\mid\TG] > 0$ and
$I[\TV;\,y\mid\TG,\TS] > 0$.
\emph{(ii)} Any two features native to the same domain
are conditionally redundant given each other.
\emph{(iii)} CST Axiom~4 (CDH), the cross-modal extension
of CFD theory \cite{oladunni2025cfd}, holds: for a second
signal $x'$ observing the same $\cA$ through a different
$h'$, $I[\TS[x];\,\TS[x']] > 0$ and
$I[\TV[x];\,\TV[x']] > 0$. The \emph{degree} of transfer is
governed by Lemma~\ref{lem:invariance}: diffeomorphism-invariant
functionals ($\lmax$, $D_2$, $h_{KS}$) agree in the
infinite-data limit, whereas fixed-scale functionals
($R_{\mathrm{det}}$ at threshold $r$; $H_\mu$ on a partition of
size $K$) carry a modality-dependent scale and need not transfer.
\end{corollary}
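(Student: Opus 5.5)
Part~(i) needs no new argument; it is a restatement of the orthogonality inequalities~\eqref{eq:orthogonality} of Theorem~\ref{thm:sufficiency}. I would invoke that theorem under the same mechanistic-engagement premise stated in its Phase~2 and note that ``$y$ engaging two or more domains'' is exactly the hypothesis there. So (i) inherits the premise and adds nothing beyond it.

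For part~(ii) I would first make ``native to a domain'' precise in the sense already implicit in the chain-rule decomposition~\eqref{eq:sufficiency}. Let $D$ be one of $\cG,\cS,\cV$, with operator output $T_D$ and conditioning set $\mathcal{C}_D$ (empty for $\cG$, $\TG[x]$ for $\cS$, $(\TG[x],\TS[\Xmat])$ for $\cV$). Call $f$ native to $D$ if two conditions hold:
\begin{itemize}
\item[(a)] $f$ is a measurable function of $T_D$, as Theorem~\ref{thm:stability}(ii) establishes for the components of $\mathbf{s}$;
\item[(b)] $f$ attains the domain's term, $I[f;y\mid\mathcal{C}_D]=I[T_D;y\mid\mathcal{C}_D]$.
\end{itemize}
For $\cS$, condition~(b) is what the completeness clause Theorem~\ref{thm:stability}(iv) supplies. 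Given two native features $f_1,f_2$ of $D$, I would apply the chain rule:
\[
I[f_1;y\mid f_2,\mathcal{C}_D]=I[f_1,f_2;y\mid\mathcal{C}_D]-I[f_2;y\mid\mathcal{C}_D].
\]
By data processing, since $(f_1,f_2)$ is a function of $T_D$, the first term is at most $I[T_D;y\mid\mathcal{C}_D]$. By~(b) for $f_2$, the second term equals that same quantity. Hence the difference is $\le 0$, and by nonnegativity it is zero. The same step with the indices swapped gives the symmetric statement, which is the pairwise conditional redundancy as worded.

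For part~(iii) I would argue through the $\lmax$ coordinate, which lower-bounds the whole mutual information.
\begin{enumerate}
\item By Takens' theorem under Axiom~2 applied to both $h$ and $h'$, the reconstructions $\Phi,\Phi'$ are diffeomorphic to $\cA$. Corollary~\ref{cor:transfer} together with Lemma~\ref{lem:invariance}(i) then gives $\lmax(\TS[x])=\lmax(\TS[x'])=\lmax(\cA)$ in the infinite-data, noise-free limit.
\item Treat $\cA$, and hence $\lmax(\cA)$, as varying across subjects (nondegenerate). Data processing yields
\[
I[\TS[x];\TS[x']]\ge I[\lmax(x);\lmax(x')]=H[\lmax(\cA)]>0.
\]
\item For $\TV$ I would use the conjugacy $\varphi=\Phi'\circ\Phi^{-1}$: the local gradients satisfy $\hat{\Fmat}'_n\approx D\varphi\,\hat{\Fmat}_n\,D\varphi^{-1}$ at corresponding states.
\item Boundedness of $D\varphi^{\pm1}$ on the compact $\Phi(\cA)$, as in the proof of Lemma~\ref{lem:invariance}(i), shows that $\lambda_n$ and $\lambda'_n$ differ by $O(1/\Delta n)$. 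The FTLE fields are therefore dependent, giving $I[\TV[x];\TV[x']]>0$.
\item The ``degree'' clause follows directly from Lemma~\ref{lem:invariance}(iii): $R_{\mathrm{det}}$ and $H_\mu$ at fixed $(r,K)$ are not conjugacy invariant, so no agreement is implied, and the empirical $\rho=0.123$ of Remark~\ref{rem:cdh} is consistent with this.
\end{enumerate}

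The main obstacle is step~4. The $O(1/\Delta n)$ distortion does not vanish at the fixed finite horizon used in Definition~\ref{def:TV}, and the phase sets $\Osys,\Odia$ are defined from $x$ and $x'$ separately, so they need not correspond under $\varphi$. I would first try to secure positivity only, not equality, by showing that $\lambda_n$ and $\lambda'_n$ are monotonically coupled through the shared $Df$. If that fails, I would fall back on the observation that $\TV$ averaged over the full orbit recovers $\lmax$, which transfers by step~2.
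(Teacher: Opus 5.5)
Part (i) cannot be obtained by citing \eqref{eq:orthogonality}, because with the operators as defined those inequalities fail for every $y$. For fixed partition, $k$ and $\Delta n$, both $\TS[\Xmat]=(\Pmat,\mu)$ and $\TV[\Xmat,\Delta n]=\bm{\lambda}$ are deterministic functions of $\Xmat=\TG[x]$. Conditioning on $\TG[x]$ therefore makes them constant, so $I[\TS;y\mid\TG]=I[\TV;y\mid\TG,\TS]=0$. The mechanistic premise concerns $y$ versus $(\Pmat,\mu)$, so it cannot change this: the conditioning variable is all of $\Xmat$. Even if $\TS$ were reread as acting on an infinite orbit, $\TV[\Xmat,\Delta n]$ would still be a function of $\TG[x]$.

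The data-processing reasoning you use in (ii) exposes the problem. For $\cS$, with $\mathcal{C}_{\cS}=\TG[x]$, both sides of your condition (b) are zero. So your (ii) holds vacuously there and at the same time contradicts your (i). The paper's one-line proof (``strict positivity when $y$ is not conditionally independent of the domain'') has the same defect, since $y$ is always conditionally independent of $\TS[\Xmat]$ given $\TG[x]$. A non-vacuous (i) would need a reformulation, e.g.\ conditioning on a genuine summary of $\cG$ such as $D_2$ or $C_{\mathrm{NL}}$ rather than on $\Xmat$.

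There are two further problems in (ii). First, what you prove for $\cS$ and $\cV$ is $I[f_1;y\mid f_2,\mathcal{C}_D]=0$, not the stated $I[f_1;y\mid f_2]=0$; conditional mutual information is not monotone in the conditioning set. Second, Theorem~\ref{thm:stability}(iv) asserts completeness of the vector $\mathbf{s}$, not that each scalar component attains the domain term. So (b) is an added hypothesis, and under it (ii) for $\cG$ is essentially definitional: two individually sufficient statistics are redundant given each other. Under the natural reading of ``native'' (any functional of $T_D$), two independent coordinates of $T_D$ that each inform $y$ refute (ii).

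In (iii), the bound $I[\TS[x];\TS[x']]\ge I[\lmax(x);\lmax(x')]$ requires $\lmax$ to factor through $(\Pmat,\mu)$, and it does not. Equation \eqref{eq:lle} uses inter-point distances in $\Xmat$, which a fixed-partition transition matrix forgets. For instance, take $x\mapsto 2x \bmod 1$ and $x\mapsto 4x \bmod 1$ with Lebesgue measure and bins $[0,\tfrac12)$, $[\tfrac12,1)$. Both give $P_{ij}=\tfrac12$ and $\mu=(\tfrac12,\tfrac12)$, yet $\lmax=\ln 2$ versus $\ln 4$. Since $(\Pmat,\mu)$ at fixed $K$ is not itself invariant (Lemma~\ref{lem:invariance}(iii)), the $\TS$ half needs a direct argument that $(\Pmat,\mu)$ and its primed counterpart are statistically dependent as functions of the common state sequence.

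For $\TV$, step~4 is, as you suspect, not a proof. A bound $|\lambda_n-\lambda'_n|\le c/\Delta n$, with $c$ set by $\sup\norm{D\varphi}$ and $\sup\norm{D\varphi^{-1}}$, forces dependence only if the across-subject spread of $\lambda_n$ exceeds $2c/\Delta n$, which you would have to assume. The fallback does not escape this. By subadditivity, the orbit average of the horizon-$\Delta n$ FTLE is an upper bound on $\lmax$ that in general reaches it only as $\Delta n\to\infty$, and it carries the same $O(1/\Delta n)$ distortion under $\varphi$. (The $\Osys$/$\Odia$ mismatch is irrelevant, since $\TV$ outputs the whole field $\bm{\lambda}$.)

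Routing through a conjugacy invariant is more careful than the paper's proof, which simply asserts that both domains converge asymptotically to the same $(\Pmat,\mu)$ and $\bm{\lambda}$. The first of those claims is contradicted by Lemma~\ref{lem:invariance}(iii) at fixed $K$. Still, as written your argument closes the $\TV$ half only in the limit $\Delta n\to\infty$ and does not close the $\TS$ half.
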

\begin{proof}
Parts (i) and (ii) follow from the definition of conditional
mutual information and the strict positivity of conditional
entropy when $y$ is not conditionally independent of the
domain. Part (iii) follows because both $x$ and $x'$
reconstruct the same $\cA$: asymptotically, their Ergodic
and Variational domains converge to the same underlying
$(\Pmat,\mu)$ and $\bm{\lambda}$, modulated by
reconstruction quality. \qed
\end{proof}

\begin{proposition}[Domain Minimality and Maximality]
\label{prop:minmax}
The partition $\{\cG, \cS, \cV\}$ is both minimal and maximal
with respect to native cardiovascular predictive capability.
\emph{Minimality}: no two domains can be merged without losing
a native capability. Specifically, (i) $\cG$ cannot be merged
with $\cS$: the Geometry Domain codomain $\{D_2, \Xmat,
\partial\cA\}$ contains structural information about the
trajectory that is discarded by $\TS$, which maps $\Xmat$ to
a stationary distribution $\mu$ and loses all phase-ordering
information. Artifact rejection requires $\Xmat$ directly.
(ii) $\cS$ cannot be merged with $\cV$: the Ergodic Domain
requires asymptotic orbit length ($T \to \infty$) to define
$\mu$, while the Variational Domain operates on finite
$\Delta n$ neighborhoods. Their codomains are distinct:
$(\Pmat, \mu)$ versus $\bm{\lambda} \in \R^N$. Blood pressure
inference requires the local field $\bm{\lambda}$; the global
invariants $(\lmax, H_\mu)$ of $\TS$ are $\bm{\lambda}$
averaged over all of $\cA$ and cannot resolve $\Osys$ from
$\Odia$.
\emph{Maximality}: no fourth domain exists with a native
capability independent of $\{\cG, \cS, \cV\}$. Any scalar
functional of $\Xmat$ belongs to one of three classes:
(a) functions of orbit geometry and density ($\cG$);
(b) functions of asymptotic transition statistics ($\cS$);
(c) functions of the local deformation field ($\cV$).
The principal scalar summaries of a compact dissipative
flow fall into exactly these topological, ergodic, and
differential classes \cite{kantz2004,walters1982}.
This is a classification argument supported by standard
dynamical systems theory rather than a derivation from
the ADT axioms; a formal proof would require an additional
axiom characterising the complete set of smooth dynamical
invariants. A proposed fourth domain would either be
reducible to one of the three (recurrence measures are
Ergodic Domain quantities; see \eqref{eq:rdet}) or
require information outside the attractor.
\end{proposition}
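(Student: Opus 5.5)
The plan is to split the argument into the two claims in Proposition~\ref{prop:minmax} and handle each separately. \emph{Minimality} is an information-loss statement that I would reduce to Theorem~\ref{thm:sufficiency} and Corollary~\ref{cor:nonredundant}(i). \emph{Maximality} is a classification statement that I would reduce to a three-way exhaustion of scalar functionals of $\Xmat$.

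For minimality, I would show that each merge strictly reduces the information available for some target. For (i), merging $\cG$ into $\cS$ means replacing $\Xmat$ by $\TS[\Xmat]=(\Pmat,\mu)$. This map is many-to-one: any permutation of the rows of $\Xmat$ that preserves the one-step bin-transition counts leaves $(\Pmat,\mu)$ unchanged but changes $\Xmat$, and hence the rank separation used in Proposition~\ref{prop:artifact}. So artifact rejection, which is the Davis--Kahan projection $\mathbf{P}_{\cM}\TG[y]$, is not a function of $\TS[\Xmat]$. The witness target is $y=\mathbf{1}[Q_w \text{ fails}]$. For it, $I[\TG;y\mid\TS]>0$ follows from the strict-positivity clause in the proof of Corollary~\ref{cor:nonredundant}. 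For (ii), I would use Phase~2 of Theorem~\ref{thm:sufficiency} with $y=P_{\mathrm{sys}}$. By Theorem~\ref{thm:bp}, $y$ is affine in $E_{\mathrm{sys}}$, which depends on $\bm{\lambda}$ restricted to $\Osys$. The global quantity $\lmax$ is a time-average of $\bm{\lambda}$ over the whole orbit. I would construct two FTLE fields with equal global average but different $\Osys$/$\Odia$ split, giving equal $(\Pmat,\mu)$-derived invariants and different $E_{\mathrm{sys}}$. This makes $I[\TV;y\mid\TG,\TS]>0$, so the merged domain loses BP capability.

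For maximality, I would argue by exhaustion over the order of the information a functional $\psi(\Xmat)$ uses. Either $\psi$ depends only on the point set $\{\Xmat_n\}$ and its density, which puts it in $\cG$. Or it depends on the ordering only through one-step transitions aggregated over the orbit, which puts it in $\cS$ via the Ergodic Theorem. Or it depends on the ordering through finite-horizon local displacements $\Xmat_{j+\Delta n}-\Xmat_j$, which puts it in $\cV$ via the linearisation~\eqref{eq:Fhat}. Any remaining dependence would involve either information outside $\Xmat$, and thus outside $\cA$ by Theorem~\ref{thm:sufficiency} Phase~1, or a composition of these, which by the chain-rule identity~\eqref{eq:sufficiency} contributes no new conditional term. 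Recurrence measures would be placed in $\cS$ via~\eqref{eq:rdet}.

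The main obstacle is the maximality step. The trichotomy "point set / asymptotic transitions / local deformation" is not obviously exhaustive. Multi-step, intermediate-horizon statistics such as $k$-step transition matrices or mid-range FTLE sit between $\cS$ and $\cV$. I would first try to absorb them by treating $\Delta n$ as a free parameter of $\TV$ and $\Pmat^k$ as a functional of $\Pmat$. If that fails, I would state maximality conditional on an added axiom that every smooth dynamical invariant is topological, ergodic, or differential, as the statement itself anticipates.
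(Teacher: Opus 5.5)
Your route differs from the paper's, which argues minimality directly from what each operator discards and leaves maximality as an acknowledged classification. Recasting minimality as conditional mutual information is reasonable, but both of your witnesses fail as constructed.

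\textbf{Part (i).} A row permutation $\Pi$ satisfies $(\Pi\Xmat)^\top(\Pi\Xmat)=\Xmat^\top\Xmat$. So every singular value of $\Xmat$ is unchanged, and with it the gap between $\sigma_{d_{\mathrm{cardiac}}}$ and $\sigma_{d_{\mathrm{cardiac}}+1}$ on which Proposition~\ref{prop:artifact} and the quality gate rest. The Davis--Kahan projection is merely permuted along with the rows. Moreover, $\Pi\Xmat$ generally breaks the delay structure $\Xmat_{n,j+1}=\Xmat_{n+\tau,j}$. It is therefore not $\TG[x']$ for any signal $x'$, and $Q_w$ is not even defined on it.

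The fact you need is that $\TS$ sees only the bin itinerary $(b_n)$, where $\Xmat_n\in B_{b_n}$. Suppose every state lies at distance at least $\delta>0$ from the bin boundaries. Then any $x'=x+\eta$ with $\max_k\abs{\eta_k}<\delta/\sqrt{d}$ keeps every label, so $\Pmat$ and $\mu$ are unchanged, while a broadband $\eta$ lifts the trailing singular values. A target such as $y=\sigma_{d_{\mathrm{cardiac}}+1}(\Xmat)$ then varies within the fibres of $\TS$ on a set of signals of positive measure. That is what $I[\TG[x];\,y\mid\TS[\Xmat]]>0$ actually requires. The ``strict-positivity clause'' of Corollary~\ref{cor:nonredundant} only restates this requirement and does not discharge it.

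\textbf{Part (ii).} The quantity you aim for, $I[\TV[\Xmat,\Delta n];\,y\mid\TG[x],\TS[\Xmat]]$, vanishes identically. By Definitions~\ref{def:TS} and~\ref{def:TV}, $\TS[\Xmat]$ and $\TV[\Xmat,\Delta n]$ are deterministic functions of $\Xmat=\TG[x]$. Two different FTLE fields must therefore come from different trajectory matrices, which conditioning on $\TG[x]$ already distinguishes. Citing Theorem~\ref{thm:sufficiency} or Corollary~\ref{cor:nonredundant}(i) cannot supply a positivity that the definitions exclude. The statement that expresses ``$\cS$ and $\cV$ cannot be merged'' is $I[\TV;\,y\mid\TS]>0$, which is what the paper's averaging argument is pointing at.

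Your construction also fixes too little. Equal global averages of $\bm{\lambda}$ pin down $\lmax$ at best approximately, and \eqref{eq:lle} computes $\lmax$ from $\Xmat$ rather than from $(\Pmat,\mu)$. They constrain none of $\Pmat$, $\mu$, $R_{\mathrm{det}}$ or $H_\mu$. Build the pair at the level of $\Xmat$ with the same itinerary-preserving device: deform the systolic upstroke within its bins so that $\hat{\Fmat}_n$, and hence $E_{\mathrm{sys}}$, changes on $\Osys$ while $(\Pmat,\mu)$ does not.

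\textbf{Maximality.} Your sketch ends, as the paper does, in the extra axiom, but two of its steps do no work. First, the chain-rule appeal is vacuous: $I[\psi(\Xmat);\,y\mid\TG[x]]=0$ for every functional $\psi$, and the same identity would make $\cS$ and $\cV$ redundant too. Second, $R_{\mathrm{det}}$ uses the ordering through $\ell$-step co-persistence of neighbouring pairs, not through aggregated one-step transitions. Your own trichotomy would therefore file it with finite-horizon divergence rather than in $\cS$. It is exactly one of the intermediate-horizon objects you flag, which makes the fallback axiom necessary rather than optional.
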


\subsection{Is ADT Feature Grouping? The Conceptual Distinction}
The three transformation domains are not arbitrary feature
groupings. A feature grouping categorises existing features by
convenience; the ADT domains apply formal operators
($\TG,\TS,\TV$) that map the signal $x$ into fundamentally
different mathematical spaces: topology, statistics, and dynamics.
The domains are structurally justified through operational
orthogonality, operating under complementary physical limits:
$\TG$ reconstructs topology via delay embedding; $\TS$ evaluates
the infinite-horizon stationary measure, discarding within-beat
phase sequencing; $\TV$ operates under the opposite limit,
computing instantaneous finite-time stretching rates that require
phase identity. Because one limit discards exactly what the other
requires, the three form a non-redundant information partition
(Theorem~\ref{thm:sufficiency}), not a modelling choice.
Table~\ref{tab:adt-vs-grouping} contrasts the two notions point by point.

\begin{table}[!tb]
\centering
\caption{ADT Transformations vs Feature Grouping: A Conceptual Distinction}
\label{tab:adt-vs-grouping}
\footnotesize
\renewcommand{\arraystretch}{1.12}
\begin{tabular}{p{1.6cm}p{3.0cm}p{3.0cm}}
\toprule
\textbf{Aspect} & \textbf{Feature Grouping} & \textbf{ADT Transformations} \\
\midrule
\textbf{Operation} & Categorize existing features & Transform signal via operators ($\TG$, $\TS$, $\TV$) [Def.~\ref{def:traj}, \ref{def:TS}, \ref{def:TV}] \\
\addlinespace
\textbf{Math.} & None arbitrary & Rigorous theorem: \\
\textbf{Just.} & categorisation & Theorem~\ref{thm:sufficiency} \\
\addlinespace
\textbf{Output} & Subsets of original feature space & Distinct information spaces: topology, statistics, dynamics \\
\addlinespace
\textbf{Unique-} & Multiple valid & Unique decomposition \\
\textbf{ness} & groupings possible & (Theorem~\ref{thm:sufficiency}) \\
\addlinespace
\textbf{Justifi-} & Convenience, & Mathematical necessity \\
\textbf{cation} & intuition, domain knowledge & from axioms \cite{takens1981} \\
\bottomrule
\end{tabular}
\end{table}

\section{The Domain Analogy}
\label{sec:analogy}
\begin{table*}[!tp]
\centering
\renewcommand{\arraystretch}{1.1}
\setlength{\tabcolsep}{5pt}
\caption{Structural Analogy Between Signal Processing and Attractor Domains.
  Each attractor domain has a direct signal-processing equivalent, a
  transformation operator, a well-defined output (codomain), and a native
  cardiovascular capability that no other domain can provide.}
\label{tab:analogy}
\begin{tabular}{p{2.2cm} p{2.0cm} p{3.2cm} p{3.2cm} p{3.6cm}}
\toprule
\textbf{Signal Equivalent} & \textbf{Attractor Domain}
  & \textbf{Operator (input $\to$ output)}
  & \textbf{What It Produces (Codomain)}
  & \textbf{Native Capability \& Validation} \\
\midrule
Time domain
\newline
\textit{Raw waveform $x(t)$; geometric structure of the signal}
&
Geometry $\cG$
\newline
\textit{Raw trajectory; geometric structure of the attractor}
&
$\TG: x_n \in \R^T \to \Xmat \in \R^{N\times d}$
\newline
Delay embedding; Takens theorem~\cite{takens1981} guarantees $\Xmat$ is a faithful reconstruction of $\cA$ when $d > 2\dim(\cA)$
&
Trajectory matrix $\Xmat$; attractor boundary $\partial\cA$; correlation dimension $D_2$
\newline
\textit{All downstream domains operate on $\Xmat$; $\cG$ is the foundation}
&
Nonlinear complexity; artifact rejection \& embedding validity
\newline
Prop.~\ref{prop:artifact}; monotonic LLE across scales (0.086--0.123)
\newline
(Section~\ref{sec:validation}) \\
\addlinespace
Frequency domain
\newline
\textit{Discards time; retains asymptotic spectral content}
&
Ergodic $\cS$
\newline
\textit{Discards phase; retains asymptotic statistical invariants}
&
$\TS: \Xmat \in \R^{N\times d} \to (\Pmat,\mu)$
\newline
Partition $\Xmat$ into $K$ cells; count transitions; extract stationary distribution $\mu$~\cite{walters1982}
&
Transition matrix $\Pmat$; stationary distribution $\mu$; invariants $(\lmax, R_{\mathrm{det}}, H_\mu)$
\newline
\textit{Characterises long-run stability of the cardiovascular control loop}
&
Cardiovascular stability estimation
\newline
Thm.~\ref{thm:stability}; $\SCSI$ AUC~$=0.757$, NPV~$=0.966$
\newline
(Section~\ref{sec:validation}) \\
\addlinespace
Time-frequency domain
\newline
\textit{Localises spectral content to specific time intervals}
&
Variational $\cV$
\newline
\textit{Localises deformation rates to specific phase regions of $\cA$}
&
$\TV: \Xmat \in \R^{N\times d} \to \bm{\lambda} \in \R^N$
\newline
Compute FTLE field over phase-locked coordinates $(\Osys,\Odia)$~\cite{haller2000,shadden2005}
&
FTLE field $\bm{\lambda}$ over systolic region $\Osys$ and diastolic region $\Odia$
\newline
\textit{Encodes within-beat hemodynamic quantities not resolvable by global statistics}
&
Cuffless blood pressure estimation
\newline
Thm.~\ref{thm:bp}; AVCT systolic blood pressure (SBP) mean absolute error (MAE)~$=2.05$~mmHg~\cite{oladunni2026avct} \\
\bottomrule
\end{tabular}
\end{table*}

Table~\ref{tab:analogy} formalises the parallel between classical
signal-analysis domains and the three attractor domains. The time
domain preserves the geometric trajectory, as $\cG$ preserves the
attractor's trajectory in state space; the frequency domain retains
asymptotic spectral content while discarding temporal location, as
$\cS$ retains asymptotic statistical invariants while discarding
phase; the time--frequency domain localises spectral content to
temporal intervals, as $\cV$ localises trajectory sensitivity to
spatial phases of the attractor. The analogy runs deeper than
structural similarity: just as the time domain is not a transform
but the raw representation from which all other signal domains
derive, $\TG$ is the raw geometric reconstruction from which both
$\TS$ and $\TV$ derive. This is why $\cG$ is a prerequisite for
both downstream domains, and why all practical estimation of
Ergodic and Variational quantities passes through $\Xmat$, just as
Fourier and wavelet coefficients are both computed from $x(t)$.

\section{Geometry Domain Instantiation: $\SCSI$}
\label{sec:instantiation}
The $\SCSI$ (Cardiovascular Stability Index) framework, the wearable-PPG instantiation of the
CSI, instantiates the Geometry Domain
operator $\TG$ for wearable PPG: it takes the raw signal
$x_n$, constructs the trajectory matrix $\Xmat$ via delay
embedding, and computes geometric invariants of $\Xmat$
($\lmax$, $R_{\mathrm{det}}$, $H$, Higuchi fractal dimension, HFD) alongside
preprocessing features of $x_n$ itself ($Q_w$, $\Omega_w$,
$A_w$, $B_w$, $R_w$, $E_w$) that characterise the signal
before embedding. All $\SCSI$ features are functionals of
$\Xmat$ or of $x_n$ directly; none require constructing
the transition matrix $\Pmat$ or stationary distribution
$\mu$ of the Ergodic Domain. The empirical results in
Sections~\ref{sec:protocol}--\ref{sec:validation}
constitute the first full validation of the Geometry
Domain native capability from wearable hardware.
Theorem~\ref{thm:stability} establishes that the CSI
scalar is the canonical Ergodic Domain functional;
the $\SCSI$ features are geometric estimators of the
underlying Ergodic Domain invariants, computed from
$\Xmat$ without explicit construction of $\Pmat$.

\subsection{PPG Observability Extension}
\label{sec:prop1}
The three-component CSI formula \eqref{eq:csi} was derived in
\cite{oladunni2026cst} for ECG. Instantiating the Geometry
Domain for wearable PPG requires three structural extensions,
which we formalise as a proposition.
\begin{proposition}[PPG Observability Extension]
\label{prop:ppg}
Application of $\TG$ to a wearable PPG observable
$x_n = h_{\mathrm{PPG}}(z_n, p_n) + \varepsilon_n$ requires
three classes of modification relative to the ECG case:
\emph{(i) Geometry Domain enforcement} (Axiom~2). The
smoothness condition $h \in C^2$ is violated by motion
artifact and poor optical contact, rendering the trajectory
matrix $\Xmat$ unreliable. A signal quality gate
$Q_w \in [0,1]$ and homeostasis proxy
$\Omega_w = 1 - \mathbf{1}[\abs{\hat{z}_w} > 3]$ enforce the
$\partial\cA$ boundary condition of \eqref{eq:boundary},
gating out windows where $\TG$ produces invalid
reconstructions.
\emph{(ii) Peripheral-robust substitution} (Axiom~3). The
peripheral state $p_n$ (arterial compliance, resistance,
venous return) disrupts the Geometry Domain estimate of
$R_{\mathrm{det}}$ from $\Xmat$ (empirically: $\rho_{R_{\mathrm{det}}}
= 0.123$ cross-modally, CI spanning zero \cite{oladunni2026cst}). Two
Geometry Domain complexity estimators invariant to smooth
amplitude scaling by $p_n$ replace this Ergodic invariant:
Higuchi Fractal Dimension
$D_{f,w}$ \cite{higuchi1988}, measuring the Hausdorff
dimension of the reconstructed attractor; and spectral
energy concentration $E_w$, measuring energy at the
fundamental cardiac frequency.
\emph{(iii) Peripheral state characterisation} (Axiom~4).
The three dimensions of $p_n$ not encoded in $(\lmax, H)$
are estimated by $\cS$-domain proxies: autonomic tone $A_w$
(sympatho-vagal balance from low-frequency/high-frequency (LF/HF) spectral ratio), vascular
compliance $B_w$ (PPG amplitude and contour), and recovery
capacity $R_w$ (post-perturbation dynamics). Together these
extend the Geometry Domain feature vector to cover peripheral
cardiovascular state.
\end{proposition}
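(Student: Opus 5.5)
The plan is to argue Proposition~\ref{prop:ppg} one class at a time, in each case identifying the axiom that the PPG observable $x_n = h_{\mathrm{PPG}}(z_n,p_n)+\varepsilon_n$ fails to satisfy unconditionally. I then show that the stated modification is exactly what restores the hypotheses of an earlier result. Since the proposition is a "requires" statement, I would prove two things for each class: \emph{necessity}, meaning that without the modification some earlier guarantee fails, and \emph{adequacy}, meaning that with it the guarantee is recovered.

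\emph{Class (i).} Definition~\ref{def:traj} and Proposition~\ref{prop:artifact} both presuppose $h\in C^2$ and a rank gap $\sigma_{d_{\mathrm{cardiac}}}(\TG[x]) - \sigma_{d_{\mathrm{cardiac}}+1}(\TG[y]) > 0$.
\begin{itemize}
\item For necessity, I would note that motion artifact injects an $\eta$ with $\norm{\TG[\eta]}$ comparable to the weakest cardiac singular value. The Davis--Kahan bound is then vacuous and $\Xmat$ is no longer diffeomorphic to $\cA$.
\item For adequacy, I would read $Q_w$ as a thresholded proxy for this singular-value gap. I would read $\Omega_w$ as the empirical indicator that $\Xmat_n$ lies in the compact support of $\hat\rho$, i.e.\ inside the absorbing ball $B$ of Axiom~1 via \eqref{eq:boundary}. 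On windows passing both gates, Proposition~\ref{prop:artifact} applies with bounded error.
\end{itemize}

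\emph{Class (ii).} I would invoke Lemma~\ref{lem:invariance}(iii) and Corollary~\ref{cor:transfer}.
\begin{itemize}
\item Because $p_n$ enters only $h_{\mathrm{PPG}}$, it fixes a modality-dependent metric scale. $R_{\mathrm{det}}$ at fixed $r$ is therefore not guaranteed to agree with its ECG value, and the Axiom~3 band calibrated on ECG no longer applies to it. The empirical $\rho=0.123$ from Remark~\ref{rem:cdh} serves as corroboration, not as proof.
\item For the substitutes, I would show scale-robustness directly. If $x_n \mapsto a(p)x_n$ with $a$ slowly varying, then Higuchi's curve lengths scale by $a$ at every lag $k$. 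The log-log slope $D_{f,w}$ is therefore unchanged. $E_w$, being a normalized energy ratio at the fundamental frequency, is also invariant to amplitude scaling.
\end{itemize}

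\emph{Class (iii).} I would argue by dimension counting, using the four macro degrees of freedom listed in the dimension specification. $(\lmax,H)$ supply at most two scalar constraints on the combined state $(z_n,p_n)$, so the three peripheral coordinates are not identified by them. Adjoining $A_w$, $B_w$ and $R_w$, each sensitive to a different component (autonomic tone, compliance, recovery), yields a locally injective map, provided the Jacobian of the feature map with respect to $p$ has full rank.

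I expect the main obstacle to be class (iii), and the "requires" (necessity) direction generally. The full-rank condition on the feature map is physiological rather than derivable from Axioms~1--4, so I would state it as an explicit premise, as Theorem~\ref{thm:sufficiency} does. Accordingly, I would present (iii) as conditional sufficiency rather than strict necessity.
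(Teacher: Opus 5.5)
The paper gives no proof of this proposition. It is followed only by a one-sentence gloss, and its support elsewhere is the remark after Proposition~\ref{prop:artifact}, which calls $Q_w$ and $\Omega_w$ discrete implementations of the rank-separation condition, plus the empirical $\rho=0.123$. Your class~(i) reproduces that remark. Classes~(ii) and (iii) go beyond the paper, and (ii) is where your argument has a genuine gap: its two halves use different transformation classes.

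\emph{The substitutes.} You certify $D_{f,w}$ and $E_w$ only under a uniform gain $x\mapsto a x$. Under a uniform gain, however, $R_{\mathrm{det}}$ is invariant too, as soon as its threshold is scale-relative (as the pipeline's tolerance $r^*=0.116\sigma$ is). With an absolute $r$, a per-window normalisation restores invariance. So amplitude invariance cannot be the property that singles out the substitutes.

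\emph{The knock-out of $R_{\mathrm{det}}$.} This comes from Lemma~\ref{lem:invariance}(iii), which concerns general, non-uniform distortions (dilating a subregion). That is the kind of map relating an ECG reconstruction to a PPG one. Under such distortions, Higuchi's estimator is a slope fitted over the fixed lag range $k\le k_{\max}^*$, so it is itself a fixed-scale functional. The lemma's reasoning (bi-Lipschitz bounds preserve limits, not fixed scales) applies to it equally. $E_w$ also changes whenever the pulse's harmonic content changes, and that is exactly how compliance and resistance act on the waveform (dicrotic notch, diastolic decay).

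To close (ii) you need one model of how $p_n$ enters $h_{\mathrm{PPG}}$ under which $R_{\mathrm{det}}$ provably varies while $D_{f,w}$ and $E_w$ provably do not. Remark~\ref{rem:cdh} cannot stand in for that model. There, sample and permutation entropy fail to transfer, the non-invariant hull area does transfer, and the class-separation test gives $p=0.54$.

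Your necessity halves are also weaker than stated.

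\emph{Class (i).} A vacuous Davis--Kahan bound removes a guarantee, but it does not show that $\Xmat$ fails to be diffeomorphic to $\cA$. Adequacy is stipulated rather than proved. Nothing links $Q_w\ge\theta^*$ (an unspecified quality score) or $\abs{\hat z_w}\le 3$ (a population outlier test) to $\sigma_{d_{\mathrm{cardiac}}}(\TG[x])-\sigma_{d_{\mathrm{cardiac}}+1}(\TG[y])>0$. In addition, Proposition~\ref{prop:artifact} concerns additive noise, not the failure of $h\in C^2$ that (i) names.

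\emph{Class (iii).} Downgrading to conditional sufficiency means the ``requires'' is not proved. Your premise is also insufficient even for sufficiency. Full rank of the Jacobian in $p$ at fixed $z$ identifies $p$ only if $z$ is known. But $A_w$, an LF/HF ratio, is driven chiefly by autonomic modulation of heart rate, which the paper's dimension specification counts as a cardiac coordinate. You need local injectivity of $(\lmax,H,A_w,B_w,R_w)$ in the joint unknowns, with the dimension count redone on that basis. Finally, the proposition lists the components of $p_n$ differently in (ii) (compliance, resistance, venous return) and (iii) (autonomic tone, compliance, recovery), so the target of the identification is not yet fixed.
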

\noindent The three extension classes (Geometry Domain
enforcement, substitution of the non-transferring Ergodic
invariant by Geometry-computed estimators, and Ergodic-proxy
characterisation of peripheral state) are what adapting the
ECG instantiation to wearable PPG requires.

\subsection{Feature Vector and CSI Formula}
\label{sec:features}
The nine-dimensional feature vector $\mathbf{u}_w$ collects
all Geometry Domain features at window scale $w$:
\begin{equation}
  \mathbf{u}_w = [Q_w,\; \Omega_w,\; H_w,\; D_{f,w},\;
    E_w,\; \Lambda_w,\; A_w,\; B_w,\; R_w],
  \label{eq:uvec}
\end{equation}
where $\Lambda_w = e^{-\lambda_w}$ maps the stabilized largest Lyapunov exponent (LLE)
to $[0,1]$. Because $H_w$, $D_{f,w}$, $E_w$, and $\Lambda_w$
are all estimates of attractor geometric complexity under
Axiom~3, they are composited into the nonlinear complexity
module:
\begin{equation}
  C_{\mathrm{NL},w} = G_{\mathrm{NL},w}
    \sum_{k \in \{\mathrm{SE,HFD,LLE,E}\}}
    w^*_k\,\psi_k,
  \label{eq:cnl}
\end{equation}
where $\psi_k = \exp(-(f_k - \mu_{f_k})^2 / 2\sigma_{f_k}^2)$
is the bounded-optimality kernel rewarding values near the
population median (operationalising Axiom~3: moderate
complexity is healthy), and $G_{\mathrm{NL},w}$ gates out
segments with extreme z-score deviation. The six-component
$\SCSI$ is then:
\begin{equation}
  \SCSIw = G_w \sum_{j=1}^{6}
    \beta^*_j X_{j,w},
    \quad \SCSIw \in [0,1],
  \label{eq:scsi}
\end{equation}
where $\mathbf{X}_w = [C_{\mathrm{NL},w}, A_w, \Omega_w,
Q_w, R_w, B_w]^\top$ and $G_w = Q_w \cdot G_{\mathrm{NL},w}$
is the composite quality gate. Boundedness follows from
$X_{j,w} \in [0,1]$ and $\beta^*_j \ge 0$,
$\sum_j \beta^*_j = 1$ (Theorem~\ref{thm:stability}).
\textbf{Stabilized LLE Estimator.}
The classical Rosenstein estimator \cite{rosenstein1993}
requires that nearest neighbors in the trajectory matrix
are not temporal neighbors. The boundary condition
$\abs{m_j - n} > \lfloor\bar{T}_{\mathrm{beat}}/2\rfloor$
enforces injectivity of the embedding map, the
Geometry Domain integrity condition of
Proposition~\ref{prop:artifact}. Without this condition,
$\TG$ produces self-intersecting reconstructions and LLE
estimation fails entirely. The stabilized estimator maps:
\begin{equation}
  \Lambda_w = e^{-\lambda_w},
  \quad \lambda_w \uparrow \;\Rightarrow\; \Lambda_w \downarrow,
  \label{eq:lambda}
\end{equation}
so that higher trajectory divergence (healthier, more complex
attractor) yields lower $\Lambda_w$, contributing correctly
to $\SCSI$ via the complexity-band hypothesis of Axiom~3.
\begin{algorithm}[t]
\caption{$\SCSI$: Geometry Domain Instantiation of $\TG$}
\label{alg:scsi}
\small
\begin{algorithmic}[1]
\Require $x_n \!\in\! \R^W$, $W\!=\!128$, $f_s\!=\!30$\,Hz;
  $\bm{\theta}^* \!=\! (m^*, \tau^*, r^*, k_{\max}^*, \theta^*)$;
  $\{\mu_k, \sigma_k\}$; $\{w^*_k\}$, $\{\beta^*_j\}$
\Ensure $\SCSIw \in [0,1]$ or \textsc{reject}
\State $Q_w \gets \mathrm{SignalQuality}(x_n)$; $\Omega_w \gets \mathbf{1}[|\hat{z}_n| \le 3]$
\If{$Q_w < \theta^*$ \textbf{or} $\Omega_w = 0$} \Return \textsc{reject} \EndIf
\State $\Xmat \gets \mathrm{DelayEmbed}(x_n, m^*, \tau^*)$ \Comment{$\TG$: $\Xmat \in \R^{N \times m^*}$}
\State $H_w \gets \mathrm{SampEn}(\Xmat, m^*, \tau^*, r^*)$
\State $D_{f,w} \gets \mathrm{HFD}(\Xmat, k_{\max}^*)$
\State $\lambda_w \gets \mathrm{LLE\_Stab}(\Xmat, \lfloor\bar{T}_{\mathrm{beat}}/2\rfloor)$; $\Lambda_w \gets e^{-\lambda_w}$
\State $E_w \gets \mathrm{SpectralEnergy}(x_n, f_s)$
\State $\psi_k \gets \exp(-(f_k \!-\! \mu_{f_k})^2/2\sigma_{f_k}^2)$, $k \!\in\! \{\mathrm{SE,HFD,LLE,E}\}$
\State $C_{\mathrm{NL},w} \gets G_{\mathrm{NL},w} \sum\nolimits_k w_k^* \psi_k$
\State $A_w \gets \mathrm{AutonomicProxy}(x_n, f_s)$; $B_w \gets \mathrm{VascularProxy}(x_n)$
\State $R_w \gets \mathrm{RecoveryProxy}(x_n)$
\State $G_w \gets Q_w \cdot G_{\mathrm{NL},w}$; $\mathbf{X}_w \gets [C_{\mathrm{NL},w}, A_w, \Omega_w, Q_w, R_w, B_w]^\top$
\State \Return $G_w \sum_{j=1}^{6} \beta^*_j X_{j,w}$
\end{algorithmic}
\end{algorithm}
\noindent
Algorithm~\ref{alg:scsi} instantiates $\TG$ only. Steps~2--4
compute geometric invariants of $\Xmat$; Steps~5--6 add
peripheral-state correction features of $x_n$ (Axiom~4).
The transition matrix $\Pmat$ and FTLE field $\bm{\lambda}$
of $\TS$ and $\TV$ are never constructed; full Ergodic and
Variational Domain instantiation is future work.
\subsection{Bayesian Parameter Optimisation}
\label{sec:bayesian}
Prediction~2 of ADT predicts that a within-beat $\cG$-dominant tachypnea endpoint requires short windows (high temporal resolution) and moderate-to-high embedding dimension (capturing local geometry). Multivariate Bayesian optimisation (Optuna v3, Tree-structured Parzen Estimator \cite{watanabe2023}, 300 trials) confirms this prediction: optimal configuration is $W^* = 128$ samples (4.3\,s, shortest available), $m^* = 8$, $\tau^* = 7$, $r^* = 0.116\sigma$, quality gate $\theta^* = 0.976$, with convergence at trial~82.

The optimised embedding dimension $m^* = 8$ aligns with the
observable dimension in PPG ($\dim(\cA|\mathrm{PPG}) \approx 3.5$,
Whitney bound $m^* \ge 8$). Component weights: SampEn 0.431, HFD
0.483, LLE 0.043, spectral energy 0.043 in $C_{NL}$; $\SCSI$ weights
($C_{NL}$:0.260, $A$:0.210, $\Omega$:0.198, $Q$:0.267, $R$:0.048,
$B$:0.017). This validates Prediction~2 and achieves optimal AUC on the development set.
\section{Corrected Evaluation Protocol}
\label{sec:protocol}
Evaluating wearable cardiovascular complexity indices requires
a protocol that prevents three systematic artifacts identified
in this work. Each artifact is a consequence of violating a
statistical independence assumption; each correction is stated
precisely.
\textbf{Artifact~1 (Segment-level cross-validation (CV) leakage, $+0.062$ AUC).}
Shuffling windows without subject stratification places
temporally adjacent segments into both train and validation
folds. Correction: record-level GroupKFold.
\textbf{Artifact~2 (Test-set normalisation leakage,
$+0.308$ AUC).} z-score statistics computed across all records
including held-out test records. The prior unoptimised
heuristic index, the Cardiac Parametrization Stability (CPS)
index, the predecessor of the $\SCSI$, has an
AUC of 0.881 (leaked) that falls to 0.573 (corrected),
fully explaining the previously reported DeLong finding
($p = 0.015$). Correction: training-only normalisation.
\textbf{Artifact~3 (Pooled AUC overweighting).} Pooled AUC
weights each segment equally; records with more segments
dominate. A 1D convolutional neural network (CNN) achieves pooled AUC~$= 0.804$ but
per-record mean AUC~$= 0.380 \pm 0.117$ (below chance).
Correction: report both pooled and per-record AUC.
Table~\ref{tab:artifacts} summarises all three artifacts.
The two biases are quantified on different evaluations and
are therefore not additive on a single figure: Artifact~1
($+0.062$) is measured on the development set (segment- vs.\
record-level CV), whereas Artifact~2 ($+0.308$) is measured
on the held-out test set, where leaked normalisation raises
the CPS AUC to 0.881 against a fair-normalisation value of
0.573. Taken together, the reported CPS AUC of 0.881
represents $1.54\times$ the fully unbiased performance
of 0.573.
\begin{table}[H]
\centering
\caption{Evaluation Artifact Summary}
\label{tab:artifacts}
\renewcommand{\arraystretch}{1.1}
\setlength{\tabcolsep}{3pt}
\begin{tabular}{p{0.35cm} p{2.6cm} p{1.7cm} p{2.6cm}}
\toprule
\textbf{\#} & \textbf{Artifact} & \textbf{Inflation}
  & \textbf{Correction} \\
\midrule
1 & Segment-level CV leakage
  & $+0.062$
  & Record-level GroupKFold \\
2 & Test-set normalisation leakage
  & $+0.308$
  & Train-only normalisation \\
3 & Pooled AUC overweighting
  & $+0.424$$^\dagger$
  & Per-record AUC \\
\midrule
\multicolumn{2}{l}{Net inflation over baseline (1+2)}
  & $+0.179$ & Baseline: 0.573 \\
\bottomrule
\multicolumn{4}{p{7.8cm}}{\small $^\dagger$ CNN: pooled 0.804, per-record $0.380\pm0.117$.}
\end{tabular}
\end{table}

\section{Empirical Validation}
\label{sec:validation}

\subsection{Datasets and Multiscale Framework Validation}
Four heterogeneous PPG datasets spanning clinical, ambulatory,
and consumer contexts provide framework validation across
176{,}742 segments at three temporal scales
($w \in \{256, 512, 1024\}$ samples; 8.5, 17.1, 34.1\,s
at $f_s = 30$\,Hz). The \emph{Beth Israel Deaconess Medical
Center (BIDMC) PPG and Respiration
Dataset} \cite{goldberger2000physiobank} provides 53 adult intensive care unit (ICU) records
with simultaneous capnographic respiratory rate (RR); the
\emph{BUT-PPG database} \cite{nemcova2021} provides
laboratory-controlled smartphone camera recordings; the
\emph{RWS-PPG dataset} \cite{jokic2026} provides large-scale
real-world wearable recordings; and the \emph{Welltory dataset}
provides consumer smartphone recordings. For clinical
optimisation and held-out testing, BIDMC records were split
patient-level into 35 development and 18 held-out test records.
We also examined the \emph{CapnoBase dataset}
\cite{karlen2013} (42 elective-surgery recordings, zero
training overlap) for external validation. The clinical endpoint was tachypnea
(RR $>$ 25\,bpm, 3{,}044 labelled segments, 7.1\% tachypneic).
\paragraph{Geometry Domain Enforcement.}
The stabilized LLE estimator demonstrates monotonically increasing
mean LLE across temporal scales ($0.086 \to 0.104 \to 0.123$),
confirming Proposition~\ref{prop:artifact}.
Although LLE estimates an Ergodic Domain quantity ($\lmax$),
its computation here is a finite-sample operation on $\Xmat$
directly (a $\cG$ operation) without ever constructing
$\Pmat$ or $\mu$ explicitly, consistent with $\SCSI$
instantiating $\TG$ only. This asymptotic $\lmax$ (an Ergodic
Domain invariant) is distinct from the finite-time Lyapunov field
of the Variational Domain; the two are different objects despite
the shared Lyapunov terminology.
The quality gates $Q_w$ and $\Omega_w$ are clinical
safety constraints: ADT proves that downstream clinical
inference is mathematically invalid when $\cG$ fails,
so a failed gate must surface to the user as an explicit
signal-quality warning rather than a degraded reading.

\paragraph{Cross-Dataset Physiologic Discrimination.}
Table~\ref{tab:crossdata} summarises Kruskal-Wallis and
pairwise Mann-Whitney results across four sources.
Cross-scale consistency $\kappa > 0.97$ confirms that $\SCSI$
rankings are preserved across temporal scales, consistent
with Ergodic Domain asymptotic invariants being
scale-independent (Theorem~\ref{thm:stability}).
$\SCSI$ is positively correlated with respiratory rate across
53 BIDMC records (Spearman $r = 0.346$,
95\,\% confidence interval (CI) $[0.035, 0.604]$, $p = 0.011$) and uncorrelated
with heart rate ($r = -0.089$, $p = 0.524$). This pooled
correlation reflects between-subject structure and does not
translate into subject-level predictive power: under
subject-blocked cross-validation (GroupKFold, $4{,}833$ windows,
53 subjects) the attractor representation predicts RR only
marginally above a mean baseline (ECG+PPG mean absolute error
$2.12$ vs.\ $2.20$\,bpm, $R^2 \approx 0$), and $\lmax$ alone
does not carry RR ($R^2 < 0$).
\begin{table}[H]
\centering
\caption{Cross-Dataset Physiologic Discrimination
  ($w = 256$ samples, Bonferroni-corrected)}
\label{tab:crossdata}
\renewcommand{\arraystretch}{1.1}
\begin{tabular}{lrr}
\toprule
\textbf{Comparison} & \textbf{Effect size} $|r|$
  & \textbf{Verdict} \\
\midrule
\multicolumn{3}{l}{\textit{Global: Kruskal-Wallis
  $H = 23{,}415$, $\eta^2 = 0.351$, $p < 0.001$}} \\
\addlinespace
BIDMC vs.\ BUT-PPG   & 0.612 & Large \\
BIDMC vs.\ RWS-PPG   & 0.743 & Large \\
BIDMC vs.\ Welltory  & 0.501 & Large \\
BUT-PPG vs.\ RWS-PPG & 0.827 & Large \\
BUT-PPG vs.\ Welltory & 0.353 & Medium \\
RWS-PPG vs.\ Welltory & 0.988 & Near-complete \\
\bottomrule
\multicolumn{3}{l}{\small All $p < 0.001$ after Bonferroni correction.}
\end{tabular}
\end{table}

\subsection{Held-Out Test}
Table~\ref{tab:results} presents $\SCSI$ and CNN performance
under the corrected protocol. The negative predictive value (NPV) of 0.966 at 7.1\%
tachypnea prevalence supports continuous negative screening:
a $\SCSI$ score below the Youden threshold rules out tachypnea
in 96.6\% of cases. The held-out pooled AUC of 0.757 exceeds
the unbiased baseline of 0.573.
The CNN pooled AUC advantage (0.804 vs.\ 0.757) reverses
per-record (0.380 vs.\ 0.497), demonstrating Artifact~3.
External validation on CapnoBase proved uninformative for
this endpoint: general anaesthesia suppresses respiratory-rate
variability, leaving only 4 of 42 records with evaluable
tachypnea labels. Cross-dataset external validation on an
RR-variable cohort therefore remains future work.
\begin{table}[H]
\centering
\footnotesize
\caption{$\SCSI$ vs.\ 1D CNN (unbiased baseline AUC~$= 0.573$)}
\label{tab:results}
\renewcommand{\arraystretch}{1.12}
\setlength{\tabcolsep}{3pt}
\begin{tabular}{lcc}
\toprule
\textbf{Metric} & $\boldsymbol{\SCSI}$ & \textbf{1D CNN} \\
\midrule
Pooled AUC        & 0.757 & 0.804 \\
\quad 95\% CI     & [0.686--0.828] & [0.749--0.852] \\
Per-record AUC    & $0.497 \pm 0.207$ & $0.380 \pm 0.117$ \\
NPV / Specificity & 0.966 / 0.834 & -- \\
Parameters        & 0 & 25{,}793 \\
Wilcoxon          & -- & $p\!=\!0.129$ (ns) \\
\bottomrule
\multicolumn{3}{p{7.8cm}}{\scriptsize
  $\SCSI$ uses zero learned parameters; the CNN pooled
  advantage reverses per-record, illustrating Artifact~3.}
\end{tabular}
\end{table}

\subsection{Domain Sufficiency Test: Ablation}
Table~\ref{tab:ablation} presents the full ablation. The
full-model baseline here is the 5-fold cross-validated AUC of
0.720 on the 35 development records, which differs from the
held-out pooled AUC of 0.757 (Table~\ref{tab:results}) because
the two are computed on different record sets and evaluation
protocols; the ablation deltas are interpreted relative to the
0.720 development baseline. Each
$\Delta$AUC directly measures the mutual information term for
that domain in \eqref{eq:sufficiency}. Three findings confirm
ADT predictions. \emph{(1) $C_{\mathrm{NL}}$ dominance}:
removal collapses AUC to 0.307 ($\Delta\mathrm{AUC} = -0.413$),
quantifying $I[\TG;\,y]$ as the dominant term; SampEn (43.1\%)
and HFD (48.3\%) account for 91.4\% of $C_{\mathrm{NL}}$
weight. LLE contributes only 4.3\%, consistent with $W^*=128$
being too short for reliable global Lyapunov estimation
(Ergodic Domain, long orbits; Prediction~2). Tachypnea is
confirmed $\cG$-dominant within the $\SCSI$ instantiation,
where respiration registers in short-window beat geometry.
\emph{(2) Intra-domain redundancy}: five components improve
AUC when removed, confirming Corollary~\ref{cor:nonredundant}:
features native to the same domain as the dominant
representative are conditionally redundant, asymmetrically:
the auxiliary $\cG$ estimators are redundant given the
composite $C_{\mathrm{NL}}$, but not conversely.
\emph{(3) Sparse architecture}: the minimal retained set
($C_{\mathrm{NL}}$, the dominant $\cG$ composite; $A_w$, the $\cS$
proxy; and $\Omega_w$, the $\cG$ enforcement term) spans the Geometry
and Ergodic Domains the endpoint engages; the Variational
Domain (within-beat strain, Theorem~\ref{thm:bp}) is
correctly not engaged by this endpoint.
\begin{table}[H]
\centering
\caption{Ablation Study: Component Contribution to $\SCSI$
  (5-fold CV, 35 BIDMC Development Records,
  Optimised Weights)}
\label{tab:ablation}
\renewcommand{\arraystretch}{1.1}
\begin{tabular}{lrrl}
\toprule
\textbf{Component removed} & \textbf{AUC} & $\mathbf{\Delta}$\textbf{AUC}
  & \textbf{ADT domain} \\
\midrule
Full $\SCSI$ (optimised) & 0.720 & -- & -- \\
\midrule
\multicolumn{4}{l}{\textit{Critical}} \\
$C_{\mathrm{NL}}$ & 0.307 & $-0.413$ & $\cG$ (dominant) \\
SampEn & 0.444 & $-0.276$ & $\cG$ \\
\multicolumn{4}{l}{\textit{Moderate}} \\
Autonomic $A_w$ & 0.681 & $-0.040$ & $\cS$ \\
Homeostasis $\Omega_w$ & 0.710 & $-0.011$ & $\cG$ \\
HFD & 0.719 & $-0.001$ & $\cG$ \\
\multicolumn{4}{l}{\textit{Noise: removal improves AUC}} \\
LLE & 0.723 & $+0.003$ & $\cG$ (redundant) \\
Recovery $R_w$ & 0.733 & $+0.012$ & $\cS$ (redundant) \\
Signal $Q_w$ (outer) & 0.747 & $+0.026$ & $\cG$ (redundant) \\
Energy $E_w$ & 0.747 & $+0.027$ & $\cG$ (redundant) \\
Vascular $B_w$ & 0.755 & $+0.035$ & $\cS$ (redundant) \\
\bottomrule
\end{tabular}
\end{table}

\subsection{Cross-Modal Validation: ECG Dataset}

\label{sec:ecg-validation}
We validated ADT instantiation via $\SCSI$ on a second, independent modality: electrocardiography. To test whether $\SCSI$ features extracted from ECG data could independently perform abnormality classification, we trained a separate logistic regression model on a 5{,}000-record PTB-XL subset \cite{wagner2020} (2{,}553 normal; 2{,}447 abnormal) at 100\,Hz, split 80/20 into 4{,}000 training and 1{,}000 held-out test records, using an analogous $\SCSI$ feature construction with modality-appropriate complexity estimators and independently optimised weights and bias. This model yielded AUC = 0.719 [0.686--0.750], with sensitivity, specificity, and significance testing (bootstrap and Beta-Binomial Bayesian inference; $p < 0.001$, all metrics) reported in Table~\ref{tab:ecg-ptbxl}. The results are not modality-optimised artifacts but reflect intrinsic cardiac attractor properties accessible across measurement systems.
\begin{table}[H]
\centering
\caption{$\SCSI$ ECG: Performance and Significance}
\label{tab:ecg-ptbxl}
\footnotesize
\renewcommand{\arraystretch}{1.12}
\begin{tabular}{lcc}
\toprule
\textbf{Metric} & \textbf{Value} & \textbf{Significance} \\
\midrule
AUC (ROC) & 0.719 [0.686--0.750] & $p<0.001$ ✓ \\
Accuracy & 0.672 [0.643--0.702] & $p<0.001$ ✓ \\
Sensitivity & 0.632 [0.588--0.676] & $p<0.001$ ✓ \\
Specificity & 0.711 [0.670--0.751] & $p<0.001$ ✓ \\
Precision & 0.676 [0.634--0.719] & -- \\
F1-Score & 0.653 [0.617--0.689] & -- \\
\addlinespace
\multicolumn{3}{l}{\textit{Methodological Reliability}} \\
Bootstrap--Bayesian $\Delta$ & $< 0.001$ & Convergence ✓ \\
Bootstrap iterations & 5000 & -- \\
\addlinespace
\multicolumn{3}{l}{\textit{Test Set Confusion (n=1{,}000)}} \\
TP / Total Positive & 309 / 489 & 63.2\% \\
TN / Total Negative & 363 / 511 & 71.1\% \\
\bottomrule
\multicolumn{3}{p{3.3cm}}{\tiny PTB-XL: 5000 records. Bootstrap + Bayesian both validate significance. All metrics exceed chance (p<0.001).}
\end{tabular}
\end{table}

Nonparametric bootstrap resampling (5{,}000 iterations; Fig.~\ref{fig:ecg-auc-bootstrap}) confirms the AUC estimate is \emph{statistically stable}: the near-Gaussian, unimodal distribution (mean $0.719 \pm 0.016$ SD) yields a 95\% CI of [0.686--0.750] with no heavy-tailed or multimodal behaviour, indicating that $\SCSI$ features generalise consistently across ECG subsamples.

\begin{figure*}[t]
\centering
\begin{subfigure}[b]{0.37\textwidth}
  \centering
  \includegraphics[width=\linewidth]{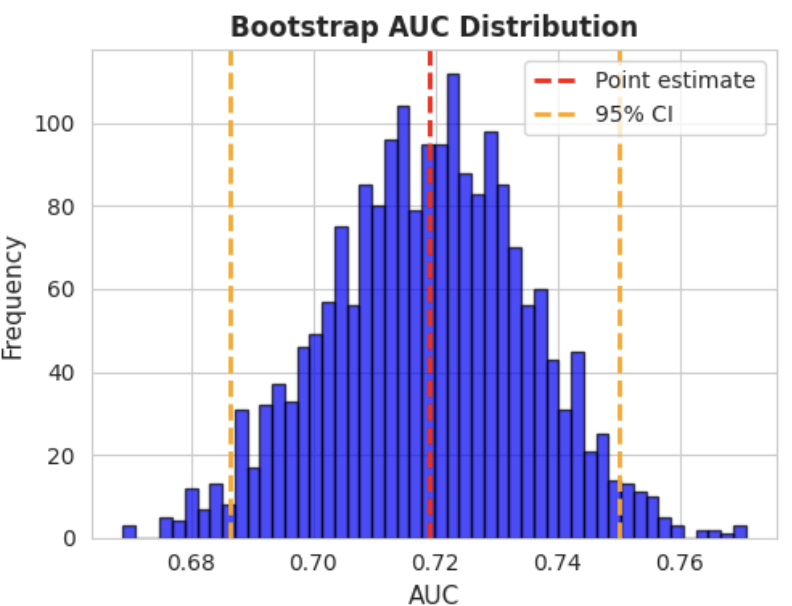}
  \caption{Bootstrap AUC distribution (5{,}000 nonparametric resamples): near-Gaussian (mean $0.719$), empirical 95\% CI [0.686--0.750] (orange dashed). The unimodal, symmetric shape confirms stability of $\SCSI$ performance across subsamples, ruling out resampling artifacts.}
  \label{fig:ecg-auc-bootstrap}
\end{subfigure}
\hfill
\begin{subfigure}[b]{0.61\textwidth}
  \centering
  \includegraphics[width=\linewidth]{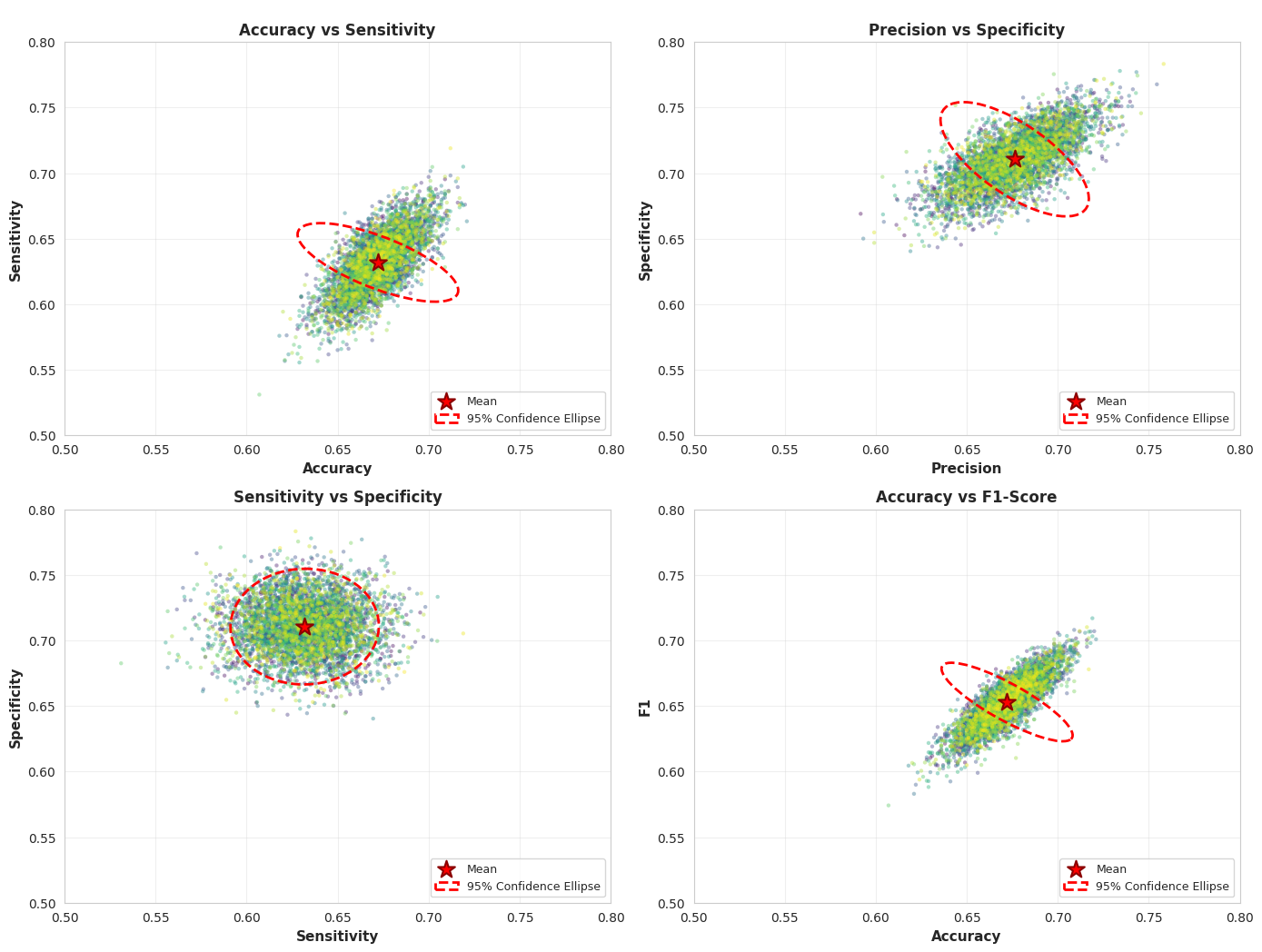}
  \caption{Bootstrap metric correlations across 5{,}000 resamples. Confidence ellipses (95\%) reveal clinical trade-offs: sensitivity gains correlate with specificity loss; the mean (red star) marks the operating point, informing threshold selection.}
  \label{fig:ecg-improvements}
\end{subfigure}
\caption{PTB-XL ECG bootstrap validation of $\SCSI$. (a) Stability of the AUC point estimate; (b) metric trade-offs across resamples.}
\label{fig:ecg-bootstrap-combined}
\end{figure*}

\paragraph{Feature Importance in ECG Domain.}

\begin{figure*}[t]
\centering
\includegraphics[width=0.74\linewidth]{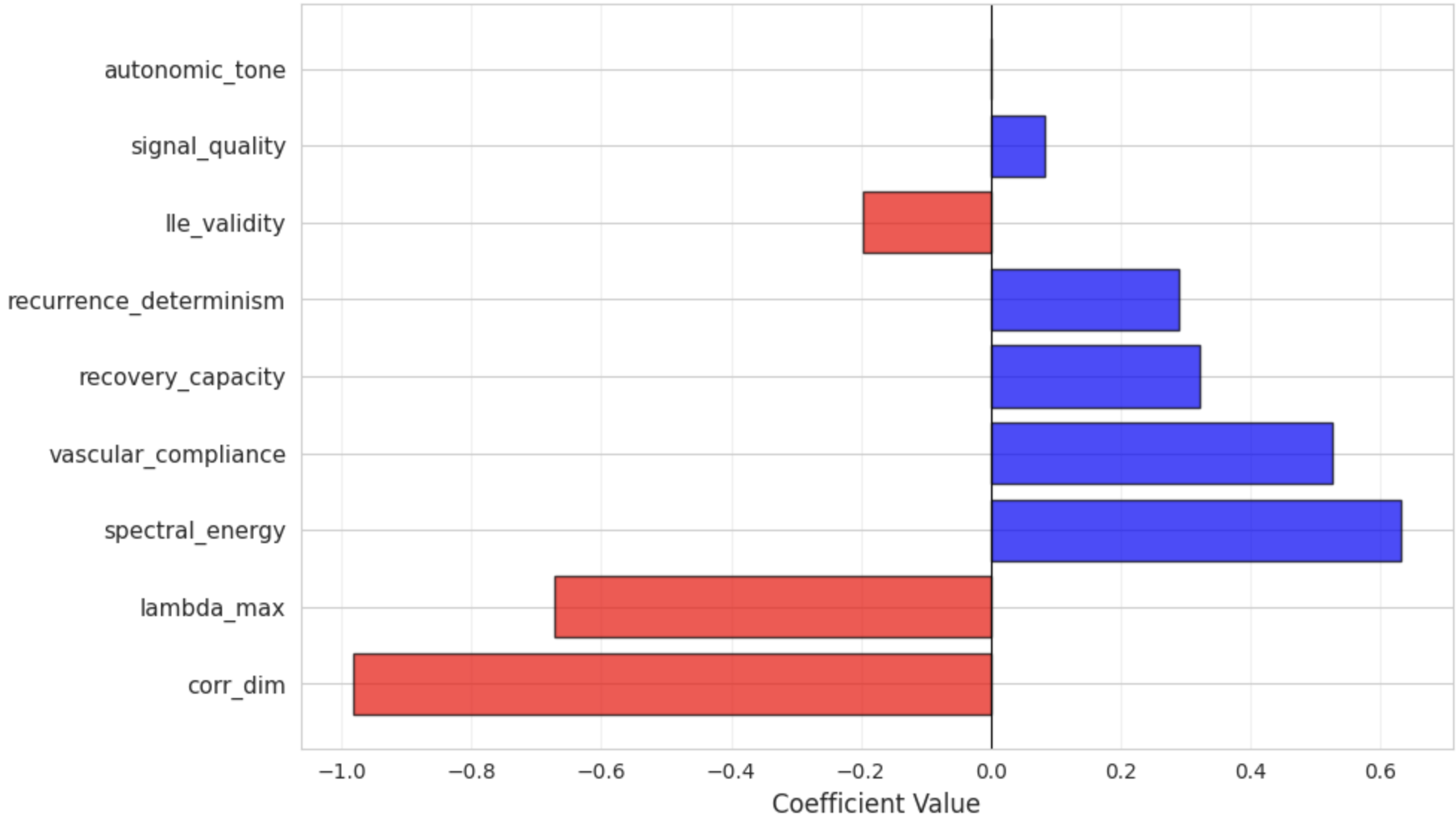}
\caption{Permutation importance of $\SCSI$ features for ECG abnormality classification (PTB-XL held-out test set, $n=1{,}000$). Positive coefficients (blue) increase abnormality risk; negative (red) are protective. The leading ECG predictors are nonlinear attractor-complexity measures, paralleling the PPG instantiation (Table~\ref{tab:ablation}) at the level of feature class rather than of identical individual features.}
\label{fig:ecg-importance}
\end{figure*}

Permutation importance on the 1{,}000-record held-out test set (Fig.~\ref{fig:ecg-importance}) shows that in both modalities the leading predictors are nonlinear attractor-complexity measures, though the specific estimators differ (sample entropy and Higuchi fractal dimension for PPG; correlation dimension and recurrence determinism for ECG). This convergence at the level of \emph{feature class} rather than identical features supports ADT's claim that attractor-based measures access cardiac dynamics reachable through multiple modalities.

Figure~\ref{fig:ecg-improvements} reveals metric trade-offs: sensitivity and specificity correlate inversely, informing clinical threshold selection where strict false-positive control (high specificity) demands accepting lower sensitivity.
The convergent PPG and ECG results, obtained with modality-appropriate attractor-complexity features rather than identical ones, support ADT's modality-invariance at the level of feature class and argue against dataset- or domain-specific overfitting.

\section{Testable Predictions}
\label{sec:predictions}
ADT yields four falsifiable predictions, summarised in
Table~\ref{tab:predictions}; each follows directly from the
domain structure and is stated precisely enough to be refuted.
\begin{table*}[!tp]
\centering
\setlength{\tabcolsep}{4pt}
\caption{Testable Predictions of Attractor Domain Theory.
  Each prediction follows directly from ADT's mathematical
  structure and is stated with sufficient specificity to
  constitute a refutable hypothesis.}
\label{tab:predictions}
\begin{tabular}{p{0.4cm} p{2.3cm} p{3.6cm} p{3.3cm} p{3.2cm}}
\toprule
\textbf{\#} & \textbf{Prediction}
  & \textbf{Statement}
  & \textbf{Status}
  & \textbf{Falsification Test} \\
\midrule
P1
& Endpoint-Domain Correspondence
& The dominant domain for any target $y$ is determined
  by physiological timescale: within-beat phenomena are
  $\cG$-dominant in the current instantiation (geometric
  invariants of $\Xmat$); multi-beat global phenomena
  are $\cS$-dominant; hemodynamic endpoints will be
  $\cV$-dominant when $\TV$ is instantiated.
& \textbf{Confirmed}: $C_{\mathrm{NL}}$ dominant
  ($\Delta\mathrm{AUC}\!=\!-0.413$,
  Table~\ref{tab:ablation}), confirming $\cG$
  native capability for tachypnea. Predicted: sepsis is
  $\cS$-dominant; BP is $\cV$-dominant (AVCT).
& Sepsis model should be dominated by $A_w$, $B_w$,
  $R_w$; BP requires FTLE features over $\Osys$,
  $\Odia$ (Thm.~\ref{thm:bp}). \\
P2
& Optimal Embedding by Domain
& Within-beat $\cG$ targets require short $W^*$, high
  $m^*$ (dense geometric sampling); $\cS$ targets
  require long $W^*$, moderate $m^*$ (ergodic
  convergence).
& \textbf{Confirmed}: $W^*\!=\!128$, $m^*\!=\!8$
  for tachypnea (Sec.~\ref{sec:bayesian}).
& $\cS$-dominant endpoint should converge to
  $W^* \gg 128$ under Bayesian optimisation. \\
P3
& Intra-Domain Redundancy
& Same-domain features: $I[\phi_1;y|\phi_2]\approx 0$.
  Cross-domain features: $I[\phi_j;y|\phi_k]>0$.
  Target's canonical feature set determined by
  domain without exhaustive search.
& \textbf{Confirmed}: 5 intra-domain components
  improve AUC when removed
  (Table~\ref{tab:ablation}).
& Adding a 2nd $\cG$ feature to a $\cG$-saturated
  model should not improve AUC. \\
P4
& Calibration Structure
& $\cV$-based BP requires one reference measurement;
  residual error is reducible only by improving
  $\cG$ (better $\Xmat$), not by adding calibration
  points.
& \textbf{Externally supported}: AVCT single-point
  calibration achieves Association for the Advancement of
  Medical Instrumentation (AAMI) accuracy across 263
  subjects \cite{oladunni2026avct}.
& Multi-point calibration should yield diminishing
  returns; better $\Xmat$ should reduce residual
  more than extra calibration points. \\
\bottomrule
\end{tabular}
\end{table*}
Predictions P1 and P3 are confirmed by the current paper.
P4 is externally supported by Attractor--Vascular Coupling
Theory (AVCT) \cite{oladunni2026avct}.
P2 predicts that a $\cS$-dominant endpoint such as sepsis
should converge to $W^* \gg 128$ under Bayesian
optimisation, the most immediately testable prediction
using existing clinical PPG datasets.

\section{Discussion}
\label{sec:discussion}
ADT is a representational theory: it specifies what
information the cardiac attractor contains, how that
information is structured across domains, and which
cardiovascular quantities are native to which domain.
It does not claim that any algorithm achieves the theoretical
limits (Remark~\ref{rem:sufficiency}), and the domain
structure is signal-independent: the three domains are
properties of $\cA$, not of $h$, which is why the CDH
holds across ECG and PPG.
\paragraph{The Parseval Analogy.}
The Domain Sufficiency Theorem is the attractor analog of
Parseval's theorem, but informational rather than isometric:
it tiles attractor information without double-counting, without
implying domain orthogonality or operator invertibility.
\paragraph{Domains, Not Feature Groups.}
The three attractor domains are a mathematical consequence of the
structure of compact manifolds and their dynamical invariants, not
an empirically discovered grouping (Section~\ref{sec:analogy}).
$\cG$ and $\cS$ require only that $\cM$ be a compact manifold with
dissipative dynamics (Axiom~1's boundedness clause), grounded in
Takens' theorem \cite{takens1981} and ergodic theory
\cite{walters1982}; $\cV$ additionally requires continuous
differentiability (Axiom~1's $C^1$ clause), since a local Jacobian
is well defined only where the flow is smooth. None of the three
require empirical validation to exist, conditional on their
licensing conditions. $\cG$ is validated natively: all $\SCSI$
features are functionals of $\Xmat$ or $x_n$, with monotonically
increasing LLE across scales (0.086--0.123). $\cS$ is not directly
instantiated: $\Pmat$ and $\mu$ are never constructed; full
validation is future work. $\cV$ is externally supported: AVCT
\cite{oladunni2026avct} confirms the affine BP mapping across 263
subjects.

\paragraph{Scope of Priority and Cross-Modal Transfer Limits.}
ADT establishes, to our knowledge, the first rigorous partition of
cardiac attractor information into provably non-redundant domains
with native clinical capabilities, and the first theoretical
explanation of the cross-modal transfer asymmetry. ECG-trained
models cannot transfer perfectly to PPG: the two trajectory
matrices are diffeomorphic images of $\cA$ through different
diffeomorphisms, with PPG additionally corrupted by $p(t)$. By
Lemma~\ref{lem:invariance} and Corollary~\ref{cor:transfer}, the
diffeomorphism invariants of $\cA$ transfer
($\lmax$: $\rho = 0.703$; $D_2$: $\rho = 0.395$), whereas the fixed-scale functionals
$R_{\mathrm{det}}$ ($\rho = 0.123$) and $H_\mu$ do not, because
their threshold and partition are set in part by the observation
operator. Claims of ECG-to-PPG clinical equivalence therefore
require domain-by-domain justification, not aggregate performance
comparison.
The primary contribution is the ADT theoretical framework,
not a new tachypnea detector. Direct state-of-the-art
comparison is complicated by the absence of published
tachypnea detection AUC under a corrected evaluation
protocol: prior results are subject to one or more of the
artifacts identified in Section~\ref{sec:protocol}
\cite{pimentel2017,nemcova2021}, and established RR
estimation methods use a different metric (MAE) that is not
directly comparable. Under the same corrected protocol, $\SCSI$
outperforms the 1D CNN baseline per-record with zero learned
parameters (Section~\ref{sec:protocol}); a head-to-head
comparison against published RR methods on BIDMC is a natural
direction for future work.

\section{Conclusion}
\label{sec:conclusion}
Attractor Domain Theory establishes that the cardiac
attractor admits three natural transformation domains: the
Geometry Domain $\cG$, the Ergodic Domain $\cS$, and the
Variational Domain $\cV$, each with a well-defined discrete
operator, a structured codomain, a quasi-inverse, and a
native predictive capability. The Domain Sufficiency
Theorem proves joint completeness and mutual non-redundancy.
Proposition~\ref{prop:minmax} proves the partition is
minimal and maximal: no two domains can be merged without
losing a native capability, and no fourth domain exists
with an independent native capability.
The Geometry Domain instantiation, the $\SCSI$ framework,
provides the first full validation of ADT's Geometry Domain
native capability for cardiovascular stability estimation. Three
evaluation artifacts that collectively inflate reported
AUC by 0.179 are identified, quantified, and corrected,
establishing an unbiased baseline of 0.573 from which
Bayesian optimisation yields genuine improvement to
AUC~$= 0.757$ [0.686--0.828] on 18 prospectively held-out
records. The ablation study directly confirms
Theorem~\ref{thm:sufficiency}: tachypnea is
$\cG$-dominant within $\SCSI$ ($\Delta\mathrm{AUC} = -0.413$
on $C_{\mathrm{NL}}$ removal), five components confirm
intra-domain redundancy, and the sparse three-component
architecture preserves exactly one representative per
domain as Proposition~\ref{prop:minmax} predicts.
The theory provides the missing foundation for a program
of prior work: CST axioms are licensing conditions;
the CSI is the canonical Ergodic Domain functional
(to be validated when $\Pmat$ is explicitly constructed); the
CDH is a corollary of non-redundancy; the PPG observability
extension has exactly three classes because there are
exactly three domains; and the affine blood pressure
mapping of AVCT \cite{oladunni2026avct} is
Theorem~\ref{thm:bp} instantiated. Results that were
obtained empirically are now derived.

\bibliographystyle{IEEEtran}
{\let\oldthebibliography\thebibliography
 \renewcommand{\thebibliography}[1]{%
   \oldthebibliography{#1}%
   \footnotesize%
   \setlength{\itemsep}{0pt}%
   \setlength{\parsep}{0pt}%
   \setlength{\parskip}{0pt}}%
\bibliography{adt_refs}}
\end{document}